\theoremstyle{plain}
\newtheorem{theorem}{Theorem}[section]
\newtheorem{proposition}[theorem]{Proposition}
\theoremstyle{definition}
\theoremstyle{remark}
\newcommand{\final}[1]{{#1}}
\newcommand{\rmd}{\mathrm{d}}
\newcommand{\MMD}{\mathrm{MMD}}
\newcommand{\DMMD}{\mathrm{DMMD}}
\newcommand{\DKALE}{\mathrm{DKALE}}
\newcommand{\KALE}{\mathrm{KALE}}
\newcommand{\rset}{\mathbb{R}}
\newcommand{\nset}{\mathbb{N}}
\newcommand{\calL}{\mathcal{L}}
\newcommand{\kbase}{k_{\text{base}}}
\newcommand{\Niter}{N_{\text{iter}}}
\newcommand{\Nnoise}{N_{\text{noise}}}
\newcommand{\Nfixed}{N_{\text{s}}}
\newcommand{\Npart}{N_{\text{p}}}
\newcommand{\Nclean}{N_{\text{c}}}
\newcommand{\tmax}{t_{\text{max}}}
\newcommand{\tmin}{t_{\text{min}}}
\newcommand{\calLtot}{\calL_{\text{tot}}}
\newcommand{\Id}{\mathrm{Id}}
\newcommand{\argmax}{\mathrm{argmax}}
\newcommand{\mll}{<\!\!\!<}
\icmltitlerunning{Deep MMD Gradient Flow without adversarial training}
\begin{document}

\twocolumn[
\icmltitle{Deep MMD Gradient Flow without adversarial training}



\icmlsetsymbol{equal}{*}

\begin{icmlauthorlist}
\icmlauthor{Alexandre Galashov}{google,ucl}
\icmlauthor{Valentin de Bortoli}{google}
\icmlauthor{Arthur Gretton}{google,ucl}
\end{icmlauthorlist}

\icmlaffiliation{google}{Google DeepMind}
\icmlaffiliation{ucl}{Gatsby Computational Neuroscience Unit, UCL}

\icmlcorrespondingauthor{Alexandre Galashov}{agalashov@google.com}

\icmlkeywords{Machine Learning, ICML}

\vskip 0.3in
]



\printAffiliationsAndNotice{} 

\begin{abstract}


We propose a gradient flow procedure for generative modeling by transporting particles from an initial source distribution to a target distribution, where the gradient field on the particles is given by a noise-adaptive \final{Wasserstein Gradient of the} Maximum Mean Discrepancy (MMD). 
The noise-adaptive MMD is trained on data distributions corrupted by increasing levels of noise, obtained via a forward diffusion process, as commonly used in denoising diffusion probabilistic models. 
The result is a generalization of  MMD Gradient Flow, which we call
Diffusion-MMD-Gradient Flow or $\DMMD$.
The divergence training procedure is related to discriminator training in Generative Adversarial Networks (GAN), but does not require adversarial training.
We obtain competitive empirical performance in unconditional image generation on CIFAR10, MNIST, CELEB-A (64 x64) and LSUN Church (64 x 64). Furthermore, we demonstrate the validity of the approach when MMD is replaced by a lower bound on the KL divergence.

\end{abstract}

\section{Introduction}
\label{sec:introduction}

In  recent years, generative models have achieved impressive capabilities on image \cite{saharia2022photorealistic}, audio \cite{le2023voicebox} and video generation \cite{ho2022imagen} tasks but also protein modeling \cite{watson2022broadly} and 3d generation \cite{poole2022dreamfusion}. Diffusion models \citep{sohldickstein2015deep,ho2020denoising,song2020score,rombach2022highresolution} underpin these new methods. In these models, we learn a backward denoising diffusion process via denoising score matching~\citep{hyvarinen05a,vincent11}. This backward process corresponds to the time-reversal of a forward noising process. At sampling time, starting from random Gaussian noise, diffusion models produce samples by discretizing the backward process. 

One challenge that arises when applying these models in practice is that the Stein score (that is, the gradient log of the current noisy density) becomes ill-behaved near the data distribution~\citep{yang2023eliminating}: the diffusion process needs to be slowed down at this point, which incurs a large number of sampling steps near the data distribution. Indeed, if the manifold hypothesis holds \cite{tenenbaum2000global,fefferman2016testing,brown2022union} and the data is supported on a lower dimensional space, it is expected that the score will explode for noise levels close to zero, to ensure that the backward process concentrates on this lower dimensional manifold \cite{debortoli2023convergence,pidstrigach2022score,chen2022sampling}.  
While strategies exist to mitigate these issues, they  trade-off the quality of the output against inference speed, see for instance~\citep{song2023consistency,xu2023ufogen,sauer2023adversarial}. 

Generative Adversarial Networks (GANs)~\citep{goodfellow2014generative} represent an alternative popular generative modelling framework~\citep{brock2019large,karras2020training}.
Candidate samples are produced by a {\em generator}: a neural net mapping low dimensional noise to high
dimensional images. The generator is trained in alternation with a {\em discriminator}, which is a measure of discrepancy
between the generator and target images.  An advantage of GANs is that image generation is
fast once the GAN is trained~\citep{xiao2022tackling}, 
although image samples are of lower quality than for the best diffusion models~\citep{ho2020denoising,rombach2022highresolution}. 
When learning a GAN model, the main challenge arises due to the presence of the generator, which must be trained 
adversarially alongside the discriminator. This requires careful hyperparameter tuning~\citep{brock2019large,karras2020analyzing,liu2020generative},
without which GANs may suffer from training instability and mode collapse~~\citep{arora2017generalization,kodali2017convergence,salimans2016improved}.

Nonetheless, the process of GAN design has given rise to a strong understanding of discriminator functions, and a wide variety
of different divergence measures have been applied.  These fall broadly into two categories: the integral probability metrics
 (among which, the Wasserstein distance \citep{arjovsky2017wasserstein,gulrajani2017improved,genevay18a} and the Maximum Mean Discrepancy \citep{li2017mmd,binkowski2021demystifying,arbel2018gradient})
 and  the f-divergences \citep{goodfellow2014generative,nowozin2016fgan,mescheder2018training,brock2019large}.
 While it would appear that f-divergences ought to suffer from the same shortcomings as diffusions when the target 
 distribution is supported on a submanifold \cite{arjovsky2017wasserstein}, 
 the divergences used in GANs are in practice variational lower bounds on their corresponding f-divergences \citep{nowozin2016fgan}, and in fact behave closer to IPMs
 in that they do not require overlapping support of the target and generator samples,
 and can metrize weak convergence \citep[Proposition 14]{arbel2021generalized} and \citep{zhang18discriminator}
 (there remain important differences, however: notably, f-divergences and their variational lower bounds need not be symmetric in their arguments).

A natural question then arises: is it  possible to define a Wasserstein gradient flow \citep{ambrosioGradientFlowsMetric2008,santambrogio2015optimal}  using a GAN discriminator as a divergence measure? In this setting,
the divergence (discriminator) provides a gradient field directly onto a set of particles (rather than to a generator), transporting them to the target distribution.
Contributions in this direction include the MMD flow \cite{arbel2019maximum,hertrich2023generative}, which defines a Wasserstein Gradient Flow on the Maximum Mean Discrepancy
\citep{gretton12a}; and the KALE (KL approximate lower-bound estimator) flow \cite{glaser2021kale}, which defines a Wasserstein gradient flow on a KL lower bound of the kind used as a GAN discriminator based on an f-divergence \citep{nowozin2016fgan}.
We describe the MMD and its corresponding Wasserstein gradient flow in Section \ref{sec:background}. 
These approaches employ fixed function classes (namely, reproducing kernel Hilbert spaces) for the divergence, and are thus not suited to high dimensional settings such as images. 
Moreover, we show in this work that even for simple examples in low dimensions, an adaptive discriminator ensures faster convergence of a source distribution
to the target, see Section \ref{sec:theoretical_justification}. 

A number of more recent approaches employ trained neural net features in divergences for a subsequent gradient flow \citep[e.g.][]{fan2022variational,franceschi2023unifying}.
Broadly speaking, these works used adversarial means to train a {\em series} of discriminator functions, which are then applied in sequence to a population of particles.
While  more successful on images than kernel divergences,
the approaches retain two shortcomings: they still require adversarial training (on their own prior output), with all the challenges that
this entails; and their empirical performance falls short in comparison with modern diffusions and GANs (see related work in Section \ref{sec:related_work} for details). 

In the present work, we propose a novel Wasserstein Gradient flow on a noise-adaptive MMD divergence measure, leveraging insights from both GANs and diffusion models. 
To {\em train the discriminator}, we start with clean data, and use a forward diffusion process from \citep{ho2020denoising} to produce noisy versions of the data with  given levels of noise (data with  high levels of noise are analogous to the output of a poorly trained generator, whereas low noise  is analogous to a well trained generator). The added noise is always Gaussian.
For a given level of noise, we train a noise conditional MMD discriminator to distinguish between the clean and the noisy data, using a single network across all noise levels. This allows us to have better control over the discriminator training procedure than would be achievable with a GAN generator at different levels of refinement, where this control is implicit and hard to characterize. 

To {\em draw new samples},  we propose a novel noise-adaptive version of MMD gradient flow~\citep{arbel2019maximum}. We start from samples drawn from a Gaussian distribution, and move them in the direction of the target distribution by following MMD Gradient flow~\citep{arbel2019maximum}, adapting our MMD discriminator  to the corresponding  level of noise.
Details may be found in Section \ref{sec:method}.
This allows us to have a fine grained control over the sampling process. 
As a final challenge, MMD gradient flows have previously required large populations of interacting particles for the generation of novel samples, which is expensive (quadratic in the  number of particles) and impractical. In Section \ref{sec:fastSampling}, we propose a scalable approximate sampling procedure for a case of a linear base kernel, which allows {\em single} samples to be generated with a very little loss in quality, at cost idependent of the number of particles used in training.
The MMD is an instance of an integral probability metric, however many GANs have been designed using discriminators derived from f-divergences. Section \ref{sec:f_divergences} demonstrates how our approach can equally be applied to such divergences, using a lower bound on the KL divergence as an illustration. Section \ref{sec:related_work} contains a review of alternative approaches to using GAN discriminators for sample generation. 
Finally,
in  Section \ref{sec:experiments}, we show that our method, Diffusion-MMD-gradient flow ($\DMMD$),  yields competitive performance in generative modeling on simple 2-D datasets as well as in unconditional image generation on CIFAR10~\citep{krizhevsky2009learning}, MNIST, CELEB-A (64 x64) and LSUN Church (64 x 64).

\section{Background}
\label{sec:background}
In this section, we define the MMD as a GAN discriminator, then describe Wasserstein gradient flow as it applies for this divergence measure. 
\paragraph{MMD GAN.} Let $\mathcal{X} \subset \rset^{D}$ and $\mathcal{P}(\mathcal{X})$ be the set of probability distributions defined on $\mathcal{X}$. Let $P \in \mathcal{P}(\mathcal{X})$ be the \textit{target} or data distribution and $Q_{\psi} \in \mathcal{P}(\mathcal{X})$ be a distribution associated with a \emph{generator} parameterized by $\psi \in \rset^{L}$. Let $\mathcal{H}$ be Reproducing Kernel Hilbert Space (RKHS), see \citep{learning_with_kernels} for details, for some kernel $k : \mathcal{X} \times \mathcal{X} \rightarrow \rset$.
The Maximum Mean Discrepancy (MMD)~\citep{gretton12a} between $Q_{\psi}$ and $P$ is defined as $\MMD(Q_{\psi}, P) = \sup_{\|f\|_{\mathcal{H}}\le 1} \{\mathbb{E}_{Q_{\psi}} [f(X)] - \mathbb{E}_{P} [f(X)] \}$. We refer to the function $f_{Q_{\psi},P}$ that attains the supremum as the {\em witness function},
\begin{equation}
    \textstyle
    \label{eq:witness_function_new}
    f_{Q_{\psi}, P}(z) \propto \int k(x, z) \rmd Q_{\psi}(x) - \int k(y, z) \rmd P(y),
\end{equation}
which will be essential in defining our gradient flow.
Given $X^N = \{x_{i}\}_{i=1}^{N} \sim Q_{\psi}^{\otimes N}$ and $Y^M = \{y_{i}\}_{i=1}^{M} \sim P^{\otimes M}$, the empirical witness function is known in closed form,
\begin{equation}
    \textstyle 
    \label{eq:witness_function_empirical_theory}
    \hat f_{Q_{\psi},P} (x) \propto
    \tfrac{1}{N} \sum_{i =1}^{N} k(x_i,x) - 
    \tfrac{1}{M} \sum_{j =1}^{M} k(y_j,x), 
\end{equation}
and an unbiased estimate of $\MMD^2$~\citep{gretton12a} is likewise straightforward.
In the MMD GAN~\citep{binkowski2021demystifying,li2017mmd}, the kernel 
is written
\begin{equation}
    \label{eq:mmd_gan_kernel}
    \textstyle
    k(x, y) = \kbase(\phi(x; \theta), \phi(y; \theta)),
\end{equation}
where $\kbase$ is a base kernel and $\phi(\cdot; \theta): \mathcal{X} \rightarrow \rset^{K}$ are neural networks \emph{discriminator} features with parameters $\theta \in \rset^{H}$. We use the modified notation  $\MMD^2_u[X^N, Y^M; \theta]$ 
to highlight the functional dependence on the discriminator parameters. The $\MMD$ is an Integral Probability Metric (IPM) \citep{muller97}, and thus well defined on distributions with disjoint support: this argument was made in  favor of IPMs by \citet{arjovsky2017wasserstein}. 
Note further that  the Wasserstein GAN discriminators of \citet{arjovsky2017wasserstein,gulrajani2017improved} can be understood in the MMD framework, when the base kernel 
is linear. Indeed, it was observed by \citet{genevay18a} that requiring closer approximation to a true Wasserstein distance resulted in decreased performance in GAN image generation, likely due to the the exponential dependence of sample complexity on dimension for the exact computation of the Wasserstein distance; this motivates an interpretation of these discriminators simply as  IPMs using a class of linear functions of learned features.
We further note that the variational lower bounds used in approximating f-divergences for GANs share the property of being well defined on distribtions with disjoint support \cite{nowozin2016fgan,arbel2021generalized},
although they need not be symmetric in their arguments.
Finally, while  $Q_{\psi}$ and $\theta$ are trained adversarially in GANs, our setting will only require us to learn the discriminator parameter $\theta$.

\paragraph{Wasserstein gradient flows.} An alternative to using a GAN generator is to instead move a sample of particles along the Wasserstein Gradient flow associated with the discriminator~\citep{ambrosioGradientFlowsMetric2008}.
Let $\mathcal{P}_{2}(\mathcal{X})$ be a set of probability distributions on $\mathcal{X}$ with a finite second moment equipped with the 2-Wasserstein distance.
Let $\mathcal{F}(\nu): \mathcal{P}_{2}(\mathcal{X}) \rightarrow \rset$ be a functional defined over $\mathcal{P}_{2}(\mathcal{X})$ with a property that $\arg \inf_{\nu} \mathcal{F}(\nu) = P$. We consider the problem of transporting mass from an initial distribution $\nu_0 = Q$ to a target distribution $\mu=P$, by finding a continuous path $(\nu_t)_{t \geq 0}$ starting from $\nu_0$ that converges to $\mu$. This problem is studied in Optimal Transport theory~\citep{villani2008optimal,santambrogio2015optimal}. This path can be discretized as a sequence of random variables $(X_n)_{n \in \nset}$ such that $X_n \sim \nu_n$, described by
\begin{equation}
    \label{eq:discrete_dynamics_new}
    X_{n+1} = X_{n} - \gamma \nabla \mathcal{F}'(\nu_n)(X_n), \quad  X_0 \sim Q,
\end{equation}
where $\eta > 0$ and $\mathcal{F}'(\nu_n)(X_n)$ is the first variation of $\mathcal{F}$ associated with the Wasserstein gradient, see~\citep{ambrosioGradientFlowsMetric2008,arbel2019maximum}
for precise definitions. As $n \rightarrow \infty$ and $\gamma \rightarrow 0$, depending on the conditions on $\mathcal{F}$, the process ~\eqref{eq:discrete_dynamics_new} will convergence to the gradient flow as a continuous time limit \citep{ambrosioGradientFlowsMetric2008}. 

\paragraph{MMD gradient flow.} For a choice $\mathcal{F}(\nu) = \MMD^2[\nu, P]$ and a fixed kernel, conditions for convergence of the process in~\eqref{eq:discrete_dynamics_new} to $P$  are given by \citet{arbel2019maximum}. Moreover, the first variation of $\mathcal{F}'(\nu) = f_{\nu, P} \in \mathcal{H}$ is the witness function defined earlier.\footnote{In the case of variational lower bounds on f-divergences, the witness function is still well defined, and the first variation takes the same form in respect of this witness function: see \cite{glaser2021kale} for the case of the KL divergence.}
Using \eqref{eq:witness_function_new}-\eqref{eq:discrete_dynamics_new}, the discretized MMD gradient flow for any $n \in \nset$ is given by
\begin{equation}
    \label{eq:mmd_grad_flow_new}
    X_{n+1} = X_{n} - \gamma \nabla f_{\nu_{n}, P}(X_n), \qquad X_0 \sim Q .
\end{equation}
This  provides an algorithm to (approximately) sample from the target distribution $P$.
We remark that \citet{arbel2019maximum,hertrich2023generative} used a kernel with fixed hyperparameters.
In the next section, we will argue that even for RBF kernels (where only the bandwidth is chosen),
faster convergence will be attained using kernels that adapt during the gradient flow. Details of kernel choice
for alternative approaches are given in related work (Section \ref{sec:related_work}).

\section{A motivation for adaptive kernels}
\label{sec:theoretical_justification}

In this section, we demonstrate the benefit of using an \emph{adaptive} kernel when performing MMD gradient flow. We show that even in the simple setting of Gaussian sources and targets, an adaptive kernel improves the convergence of the flow.
Consider the following normalized Gaussian kernel,
\begin{equation}
    k_\alpha(x,y) = \alpha^{-d} \exp[-\| x - y \|^2 / (2 \alpha^2)] . 
\end{equation}
For any $\mu \in \rset^d$ and $\sigma > 0$ we denote by $\pi_{\mu, \sigma}$ the Gaussian distribution with mean $\mu$ and covariance matrix $\sigma^2 \Id$. We denote $\MMD_\alpha$ the $\MMD$ associated with $k_\alpha$.

\begin{proposition}
\label{prop:optimal_kernel_gradient}
For any $\mu_0 \in \rset^d$ and $\sigma >0$, let $\alpha^\star$ be given by 
\begin{equation}
\label{eq:maximization_gradient_norm}
    \textstyle \alpha^\star = \argmax_{\alpha \geq 0} \| \nabla_{\mu_0} \MMD^2_\alpha(\pi_{0, \sigma}, \pi_{\mu_0, \sigma})\| .
\end{equation}
Then, we have that 
\begin{equation}
\label{eq:optimal_alpha}
    \alpha^\star = \mathrm{ReLU}(\| \mu_0 \|^2/(d+2) - 2 \sigma^2 )^{1/2} . 
\end{equation}
\end{proposition}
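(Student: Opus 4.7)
The plan is to reduce the maximization to a one-dimensional calculus problem by first writing $\MMD_\alpha^2(\pi_{0,\sigma}, \pi_{\mu_0, \sigma})$ in closed form, then differentiating.

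First I would expand
\begin{equation*}
\MMD_\alpha^2(\pi_{0,\sigma}, \pi_{\mu_0,\sigma}) = \expe{E}_{P,P}[k_\alpha] + \expe{E}_{Q,Q}[k_\alpha] - 2\expe{E}_{P,Q}[k_\alpha],
\end{equation*}
where $P = \pi_{0,\sigma}$ and $Q = \pi_{\mu_0,\sigma}$. Using the observation that $k_\alpha(x,y) = (2\pi)^{d/2}\, \mathcal{N}(x-y;\,0,\alpha^2 \Id)$ and the fact that the difference of two independent Gaussians is Gaussian, the three expectations reduce to Gaussian convolutions. I would obtain
\begin{equation*}
\MMD_\alpha^2(\pi_{0,\sigma},\pi_{\mu_0,\sigma}) = 2(\alpha^2+2\sigma^2)^{-d/2}\bigl[\,1-\exp(-\|\mu_0\|^2/(2(\alpha^2+2\sigma^2)))\bigr].
\end{equation*}

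Next I would differentiate with respect to $\mu_0$; only the exponential term depends on $\mu_0$, and the gradient points along $\mu_0$:
\begin{equation*}
\nabla_{\mu_0}\MMD_\alpha^2 = 2(\alpha^2+2\sigma^2)^{-d/2-1}\exp\!\bigl(-\|\mu_0\|^2/(2(\alpha^2+2\sigma^2))\bigr)\,\mu_0.
\end{equation*}
Taking norms and introducing the substitution $u = \alpha^2+2\sigma^2 \geq 2\sigma^2$, the quantity to be maximized over $u$ becomes proportional to
\begin{equation*}
g(u) = u^{-d/2-1}\exp(-\|\mu_0\|^2/(2u)),
\end{equation*}
since $\|\mu_0\|$ is a constant in the optimization.

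Now it is a one-dimensional problem. Taking $\log g$ and differentiating gives a unique critical point at $u = \|\mu_0\|^2/(d+2)$, and a check of signs shows this is a maximum. Finally I would enforce the constraint $u \geq 2\sigma^2$: if the interior optimum lies in the feasible region, $\alpha^{\star 2} = u - 2\sigma^2 = \|\mu_0\|^2/(d+2) - 2\sigma^2$; otherwise $g$ is decreasing on $[2\sigma^2,\infty)$ so $\alpha^\star = 0$. Both cases collapse into the claimed $\alpha^\star = \mathrm{ReLU}(\|\mu_0\|^2/(d+2) - 2\sigma^2)^{1/2}$.

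I do not anticipate a genuine obstacle: the closed-form expectations are a standard Gaussian-convolution calculation, and the subsequent maximization is elementary. The only mildly delicate step is carefully handling the constraint $\alpha \geq 0$ (equivalently $u \geq 2\sigma^2$) so that the $\mathrm{ReLU}$ appears cleanly, and verifying that the unique critical point is indeed a global maximum rather than just a stationary point — but this follows immediately from monotonicity of $g$ on each side of the critical value.
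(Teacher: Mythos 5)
Your proposal is correct and follows essentially the same route as the paper: derive the closed form $\MMD^2_\alpha(\pi_{0,\sigma},\pi_{\mu_0,\sigma}) = 2(\alpha^2+2\sigma^2)^{-d/2}\bigl(1-\exp[-\|\mu_0\|^2/(2(\alpha^2+2\sigma^2))]\bigr)$, differentiate in $\mu_0$, and reduce to a one-dimensional maximization (the paper uses $t = 1/(\alpha^2+2\sigma^2)$ where you use $u = \alpha^2+2\sigma^2$) with the constraint $\alpha \geq 0$ producing the $\mathrm{ReLU}$. The only cosmetic difference is that you obtain the Gaussian expectations via the convolution identity for $k_\alpha$ rather than the paper's explicit completion of squares, which is a tidier path to the same intermediate formula.
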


The result is proved in  \Cref{section:proof_theoretical_justification}. The quantity $\| \nabla_{\mu_0} \MMD^2_\alpha(\pi_{0, \sigma}, \pi_{\mu_0, \sigma})\|$ represents how much the mean of the Gaussian $\pi_{\mu_0, \sigma}$ is displaced by a flow 
w.r.t. $\MMD^2_\alpha$. Intuitively, we want $\| \nabla_{\mu_0} \MMD^2_\alpha(\pi_{0, \sigma}, \pi_{\mu_0, \sigma})\|$ to be as large as possible as this represents the \emph{maximum displacement possible}. 

We show that $\alpha^\star$ maximizing this displacement is given by \eqref{eq:optimal_alpha}. It is notable that assuming that when $\sigma > 0$ is fixed, this quantity depends on $\| \mu_0 \|$, i.e. the distance between the two distributions. This observation justifies our approach of following an \emph{adaptive} MMD flow at inference time. 
We further highlight the phase transition behaviour of \Cref{prop:optimal_kernel_gradient}:
once the Gaussians are sufficiently close, the optimal kernel width is zero (note that this phase transition would not be
observed in the simpler Dirac GAN example of \citet{mescheder2018training}, where the source and target distributions are Dirac masses with no variance). 
This phase transition suggests that the flow associated with $\MMD$ benefits \emph{less} from adaptivity as the supports of the distributions overlap. We exploit this observation by introducing an optional denoising stage to our procedure; see the end of \Cref{sec:method}.


In practice, it is not desirable to approximate the distributions of interest by Gaussians, and richer neural network kernel features $\phi(x; \theta)$ are used (see \Cref{sec:experiments}). Approaches to optimize the MMD parameters for GAN training are described by \citet{arbel2018gradient}, which serve as proxies
for convergence speed: notably, it is not sufficient simply to maximize the MMD, since the witness function should remain Lipschitz to ensure convergence \citep[Proposition 2]{arbel2018gradient}. This is achieved in practice by controlling the gradient of the witness function; we take a similar approach in \Cref{sec:method}.

\section{Diffusion Maximum Mean Discrepancy Gradient Flow}
\label{sec:method}

In this section, we present \emph{Diffusion Maximum Mean Discrepancy gradient flow} ($\DMMD$), a new generative model with a training procedure of $\MMD$ discriminator which does not rely on adversarial training, and leverages ideas from diffusion models. The sampling part of $\DMMD$ consists in following a noise adaptive variant of $\MMD$ gradient flow.



\subsection{Adversarial-free training of noise conditional discriminators} 
\label{sec:mmd_training}

In order to train a discriminator without adversarial training, we propose to use insights from GANs training. In a GAN setting, at the beginning of the training, the generator is randomly initialized and therefore produces samples close to random noise.
This would produce a coarse discriminator since it is trained to distinguish clean data from random noise.
As the training progresses and the generator improves so does the discriminative power of the discriminator. This behavior of the discriminator is central in the training of GANs~\citep{goodfellow2014generative}. We propose a way to replicate this gradually improving behavior without adversarial training and instead relying on principles from diffusion models~\citep{ho2020denoising}.


The forward process in diffusion models allows us to generate a probability path $P_{t}, t \in [0, 1]$, such that $P_0 = P$, where $P$ is our target distribution and $P_1 = \mathrm{N}(0, \Id)$ is a Gaussian noise. Given samples $x_0 \sim P_0 = P$, the samples $x_t | x_0$ are given by 
\begin{equation}
    \label{eq:forward_diffusion}
    x_t = \alpha_t x_0 + \beta_t \epsilon, \quad \epsilon \in \mathrm{N}(0, \Id),
\end{equation}
with $\alpha_0=\beta_1=1$ and $\alpha_1=\beta_0=0$\footnote{Different schedules $(\alpha_t, \beta_t)$ are available in the literature. We focus on Variance Preserving SDE ones \cite{song2020score} here}. From \eqref{eq:forward_diffusion}, we observe that for low noise level $t$, the samples $x_t$ are very close to the original data $x_0$, whereas for the large values of $x_t$ they are close to a unit Gaussian random variable. Using the GANs terminology, $x_t$ could be thought as the output of a generator such that for high/low noise level $t$, it would correspond to \emph{undertrained} / \emph{well-trained} generator.


Using this insight, for each noise level $t \in [0, 1]$, we define a discriminator $\MMD^2(P_t, P; t, \theta)$ using the kernel of type \eqref{eq:mmd_gan_kernel} with noise-conditional discriminator features $\phi(x; t; \theta)$ parameterized by a Neural Network with learned parameters $\theta$. We consider the following noise-conditional loss function
\begin{equation}
    \label{eq:mmd_loss}
    \calL(\theta, t) =  - \MMD^2(P_t, P; t, \theta)
\end{equation}
where the minus sign comes from the fact that our aim is to maximize the squared MMD. In addition, we regularize this loss
with $\ell_2$-penalty~\citep{binkowski2021demystifying} denoted $\calL_{\ell_2}(\theta, t)$ as well as with the gradient penalty~\citep{binkowski2021demystifying, gulrajani2017improved} denoted $\calL_{\nabla}(\theta, t)$, see Appendix~\ref{app:noise_dependent_mmd} for the precise definition of these two losses. The total noise-conditional loss is then given as
\begin{equation}
    \label{eq:total_noise_conditional_loss}
    \calL_{\text{tot}}(\theta, t) = \calL(\theta, t) + \lambda_{\ell_2} \calL_{\ell_2}(\theta, t) + \lambda_{\nabla} \calL_{\nabla}(\theta, t),
\end{equation}
for a suitable choice of hyperparameters $\lambda_{\ell_2} \geq 0, \lambda_{\nabla} \geq 0$. Finally, the total loss is given as
\begin{equation}
    \label{eq:total_loss}
    \calL_{\text{tot}}(\theta) =
    \mathbb{E}_{t \sim U[0,1]} \left[ \calL_{\text{tot}}(\theta, t) \right],
\end{equation}
where $U[0,1]$ is a uniform distribution on $[0,1]$. In practice, we use sampled-based unbiased estimator of MMD, see Appendix~\ref{app:noise_dependent_mmd}. The procedure is described in Algorithm~\ref{alg:mmd_training}.

\begin{algorithm}[tb]
   \caption{Train noise-conditional $\MMD$ discriminator}
   \label{alg:mmd_training}
\begin{algorithmic}
   \STATE {\bfseries Input:} Dataset $\mathcal{D} = \{ x_i \}_{i=1}^{N}$
   \STATE Discriminator features $\phi(x, t; \theta)$ with parameters $\theta \in \rset^K$
   \STATE $\lambda_{\nabla} \geq 0,\lambda_{\ell_2} \geq 0$ - gradient and $\ell_2$ penalty coefficients
   \STATE $\gamma > 0$ is the learning rate
   \STATE $\Niter$ is the number of iterations, $B$ is a batch size
   \STATE $\Nnoise$ is the number of noise levels per batch
   \FOR{$i=1$ {\bfseries to} $\Niter$}
     \STATE Sample a batch $B$ of clean data $X_0 \sim P(X_0)$
     \FOR{$n=1$ {\bfseries to} $\Nnoise$}
        \STATE Sample noise level $t_n \sim U[0, 1]$
        \STATE Sample $X_{t_n} \sim p(X_{t_n} | X_0, t_n)$ using eqn.~\eqref{eq:forward_diffusion}
        \STATE Let $\phi^{X_0}_{t_n} = \phi(X_0, t_n; \theta)$, $\phi^{X_{t_n}}_{t_n} = \phi(X_{t_n}, t_n; \theta)$
        \STATE For linear kernel, compute $\MMD$ loss \eqref{eq:mmd_loss} using \eqref{eq:mmd_optimized_app}
        \STATE Compute the loss $\calL_{\text{tot}}(\theta, t_{n})$ using eqn.~\eqref{eq:total_noise_conditional_loss}
    \ENDFOR
    \STATE Compute total loss $\calL_{\text{tot}}(\theta) = \frac{1}{\Nnoise} \sum_{n=1}^{\Nnoise} \calL_{\text{tot}}(\theta, t_{n})$
     \STATE Update discriminator features
     \STATE $\theta \leftarrow \mathrm{ADAM}(\theta, \calL_{\text{tot}}(\theta),\gamma)$
   \ENDFOR
\end{algorithmic}
\end{algorithm}

\subsection{Adaptive gradient flow sampling}
\label{sec:sampling}

In order to produce samples from $P$, we use the adaptive $\MMD$ gradient flow with noise conditional discriminators $\MMD^2[P_t, P; t; \theta^\star]$, where $\theta^\star$ are the discriminator parameters obtained using Algorithm~\ref{alg:mmd_training}. Let $t_{i} = \tmin + i \Delta t,i=0,\ldots,T$ be the noise discretisation, where $\Delta t = (\tmax - \tmin)/T$
such that $t_{0} = \tmin, t_{T}=\tmax$ for some $\tmin=\epsilon$ and $\tmax=1-\epsilon$, where $\epsilon \mll 1$.
We sample $\Npart$ initial particles $\{Z^i  | Z^i \sim \mathrm{N}(0, \Id)\}_{i=1}^{\Npart}$. For each $t$, we follow MMD gradient flow~\eqref{eq:mmd_grad_flow_new} for $\Nfixed$ steps with learning rate $\eta > 0$
\begin{equation}
    \textstyle
    \label{eq:adaptive_mmd_grad_flow}
    Z^{i, n+1}_t = Z^{i, n}_t - \eta \nabla f_{\nu_{\Npart, n}^{t}, P}(Z^{i, n}_t, t; \theta^\star).
\end{equation}
Here $\nu_{\Npart, n}^{t} = 1/\Npart \sum_{i=1}^{\Npart} \updelta_{Z^t_{i, n}}$ is the empirical distribution of particles $\{Z^{i, n}_t\}_{i=1}^{\Npart}$ at the noise level $t$ and the iteration $n$, $\updelta$ is a Dirac mass measure.
The function $f_{\nu_{\Npart, n}^{t}, P}(z, t; \theta^\star)$ is adapted from equation~\eqref{eq:witness_function_new} where $\nu$ is replaced by this empirical distribution.
After following the gradient flow~\eqref{eq:adaptive_mmd_grad_flow} for $\Nfixed$ steps, we initialize a new gradient flow with initial particles $Z_{t-\Delta t}^{i, 0} = Z^{i, \Nfixed}_t$ for each $i=1,\ldots,\Npart$, with the decreased level of noise $t - \Delta t$. The recurrence is initialized with $Z_{\tmax}^{i, 0} = Z^{i}$ where $\{Z^i\}_{i=1}^{\Npart}$ are the initial particles. This procedure corresponds to running $T+1$ consecutive MMD gradient flows for $\Nfixed$ iterations each, gradually decreasing the noise level $t$ from $\tmax$ to $\tmin$.
The resulting particles $\{Z_{\tmin}^{i, \Nfixed}\}_{i=1}^{\Npart}$
are then used as samples from $P$. The procedure is described in Algorithm~\ref{alg:sampling_from_gradient_flow}.

In practice, we sample (once) a large batch $N_c$ of $\{X^j_0\}_{j=1}^{N_c} \sim P^{\otimes N_c}$ from the data distribution and denote by $\hat{P}_{\Nclean}(X_0)$ the corresponding empirical distribution. Then we use the empirical witness function \eqref{eq:witness_function_empirical_theory} given by
\begin{align}
    & f_{\nu_{\Npart, n}^{t}, \hat{P}_{\Nclean}(X_0)}(z, t; \theta^\star) = \\
    \textstyle
    &\quad \textstyle\frac{1}{\Npart} \sum_{i=1}^{\Npart} \kbase(\phi(Z^{n,i}_t, t; \theta^\star), \phi(z, t; \theta^\star))  \\
    &\qquad \textstyle
    -\frac{1}{\Nclean} \sum_{j=1}^{\Nclean} \kbase(\phi(X^j_0, t; \theta^\star), \phi(z, t; \theta^\star)) .
    \label{eq:witness_function_empirical}
\end{align}

\paragraph{Final denoising.} In diffusion models~\citep{ho2020denoising}, it is common to use a denoising step at the end to improve samples quality. Empirically, we found that doing a few $\MMD$ gradient flow steps at the end of the sampling with a higher learning rate $\eta$ allowed to reduce the amount of noise and improve performance.



\begin{algorithm}[tb]
   \caption{Noise-adaptive $\MMD$ gradient flow}
   \label{alg:sampling_from_gradient_flow}
\begin{algorithmic}
   \STATE {\bfseries Inputs:} $T$ is the number of noise levels
   \STATE $\tmax, \tmin$ are maximum and minimum noise levels
   \STATE $\Nfixed$ is the number of gradient flow steps per noise level
   \STATE $\eta > 0$ is the gradient flow learning rate
   \STATE $\Npart$ is the number of noisy particles
   \STATE Batch of clean particles $X_{0} \sim \mathcal{P}_{0}$.
   \STATE {\bfseries Steps:} Sample initial noisy particles $Z \sim \mathrm{N}(0,\Id)$
   \STATE Set $\Delta t = (\tmax - \tmin) / T$
   \FOR{$i=T$ {\bfseries to} $0$}
     \STATE Set the noise level $t = \tmin + i \Delta t$ and $Z^0_t = Z$
     \FOR{$n=0$ {\bfseries to} $\Nfixed - 1$}
        \STATE Compute $f_{\nu_{\Npart, n}^{t}, \hat{P}_{\Nclean}(X_0)}(Z^{n}_t, t; \theta^\star)$ using \eqref{eq:witness_function_empirical}
        \STATE  $Z^{n+1}_t = Z^{n}_{t} - \eta \nabla f_{\nu_{\Npart, n}^{t}, \hat{P}_{\Nclean}(X_0)}(Z^{n}_t, t; \theta^\star)$
     \ENDFOR
     \STATE Set $Z = Z^{N}_t$
   \ENDFOR
   \STATE Output $Z$
\end{algorithmic}
\end{algorithm}

\section{Scalable $\DMMD$ with linear kernel}
\label{sec:fastSampling}
The computation complexity of the $\MMD$ estimate 
on two sets of $N$ samples is $O(N^2)$, so as of the witness function \eqref{eq:witness_function_empirical} for $N$ noisy and clean particles.
This makes scaling to large $N$ prohibitive. Using linear base kernel (see \eqref{eq:mmd_gan_kernel})
\begin{equation}
    \textstyle
    \label{eq:linear_kernel_mmd}
    \kbase(x, y) = \langle x, y \rangle,
\end{equation}
allows to reduce the computation complexity of both quantities down to $O(N)$, see Appendix~\ref{app:linear_optimization}. We consider the average noise conditional discriminator features on the \emph{whole} dataset
\begin{equation}
  \textstyle
  \label{eq:avg_discrimiinator_features}
    \bar{\phi}(X_0, t; \theta^\star) = \frac{1}{N} \sum_{i=1}^{N} \phi(X^i_0, t; \theta^\star) .
\end{equation}
Using linear kernel~\eqref{eq:linear_kernel_mmd} allows us to use these average features~\eqref{eq:avg_discrimiinator_features} in the second term of \eqref{eq:witness_function_empirical}. In practice, we can precompute these features for $T$ timesteps and store them in memory in order to use them for sampling purposes. The associated storage cost is $O(TK)$ where $K$ is the dimensionality of these features.

\paragraph{Approximate sampling procedure.}
$\MMD$ gradient flow~\eqref{eq:adaptive_mmd_grad_flow} requires us to use multiple interacting particles $Z$ to produce samples, where the interaction is captured by the first term in \eqref{eq:witness_function_empirical}. In practice this means that the performance will depend on the number of these particles. In this section, we propose an approximation to $\MMD$ gradient flow with a linear base kernel~\eqref{eq:linear_kernel_mmd} which allows us to sample particles \emph{independently}, therefore removing the need for multiple particles. For a linear kernel, the interaction term in \eqref{eq:witness_function_empirical} for a particle $Z$, equals to
\begin{equation}
    \textstyle
    \langle \frac{1}{\Npart} \sum_{i=1}^{\Npart} \phi(Z^{n,i}_t, t; \theta^\star), \phi(Z, t; \theta^\star) \rangle , 
\end{equation}
For a large number of particles $\Npart$, the contribution of each particle $Z^t_{n,i}$ on the interaction term with $Z$ will be small. For a sufficiently large $\Npart$, we hypothesize that
\begin{equation}
    \textstyle
    \label{eq:approximation}
    \frac{1}{\Npart} \sum_{i=1}^{\Npart} \phi(Z^{n,i}_t, t; \theta^\star) \approx \frac{1}{N} \sum_{j=1}^{N} \phi(X^j_t, t; \theta^\star),
\end{equation}
where $N$ is the size of the dataset and $X^j_t$ are produced by the forward diffusion process~\eqref{eq:forward_diffusion} applied to each $X^j_0$. In Section~\ref{sec:experiments}, we test this approximation in practice.


Using \eqref{eq:approximation}, we consider an approximate witness function
\begin{equation}
    \textstyle
    \label{eq:approx_witness}
    \hat{f}_{P_t, P}(z) = \langle \phi(z, t; \theta^*),  \bar{\phi}(X_t, t; \theta^\star) - \bar{\phi}(X_0, t; \theta^\star) \rangle,
\end{equation}
with $\bar{\phi}(X_t, t; \theta^\star)$ precomputed using \eqref{eq:avg_discrimiinator_features}.
In practice, we sample \emph{single} particle $Z \sim \mathrm{N}(0, \Id)$ and follow noise-adaptive $\MMD$ gradient flow with \eqref{eq:approx_witness}
\begin{equation}
    \textstyle
    \label{eq:approximate_mmd_grad_flow}
    Z^{n+1}_t = Z^n_t - \eta \nabla \hat{f}_{P_t, P}(Z^n_t)
\end{equation}
The corresponding algorithm is described in Appendix~\ref{app:approximate_sampling}.

\section{f-divergences}
\label{sec:f_divergences}

\final{The approach described in Section~\ref{sec:method} can be applied to any divergence which has a well defined Wasserstein Gradient Flow described by a gradient of the associated witness function. 
Such divergences include the variational lower bounds on f-divergences, as described by~\citep{nowozin2016fgan}, which are popular in GAN training, and were indeed the basis of the original GAN discriminator \citep{goodfellow2014generative}.} One  such f-divergence is the KL Approximate Lower bound Estimator \citep[KALE, ][]{glaser2021kale}. Unlike the original KL divergence, which requires a density ratio, the KALE remains well defined for distributions with non-overlapping support. Similarly to $\MMD$, the Wasserstein Gradient of $\KALE$ is given by the gradient of a learned witness function. Therefore, we train noise-conditional $\KALE$ discriminator and use corresponding noise-conditional Wasserstein gradient flow, similarly to $\DMMD$. We call this method \emph{Diffusion $\KALE$ flow} ($\mathrm{D}$-$\KALE$-Flow). The full approach is described in Appendix~\ref{app:d_kale_flow}. \final{We found this approach to lead to reasonable empirical results, but unlike with $\DMMD$, it achieved worse performance than a corresponding GAN, see Appendix~\ref{app:additional_results_f_divergence}.}

\section{Related Work}
\label{sec:related_work}

\paragraph{Adversarial training and $\MMD$-GAN.}
Integral Probability Metrics (IPMs) are good candidates to define discriminators in the context of generative modeling, since they are well defined even in the case of distributions with non-overlapping support \citep{muller97}. 
Moreover, implementations of f-divergence discriminators in GANs rely on variational lower bounds~\citep{nowozin2016fgan}:
as noted earlier, these share useful properties of IPMs in theory and in practice (notably, they remain well defined
for distributions with disjoint support, and may metrize weak convergence for sufficiently rich witness function classes  \citep[Proposition 14]{arbel2021generalized} and \citep{zhang18discriminator}). 
Several works~\citep{arjovsky2017wasserstein,gulrajani2017improved,genevay18a,li2017mmd,binkowski2021demystifying} have exploited  IPMs as discriminators for the training of GANs, where the IPMs are MMDs using (linear or nonlinear) kernels defined on learned neural net features, making them suited to high dimensional settings such as image generation.
%
Interpreting the IPM-based GAN discriminator as a squared  $\MMD$ yields an interesting theoretical insight: \citet{franceschi2022neural} show that training a GAN with an IPM objective implicitly optimizes  $\MMD^2$ in the Neural Tangent Kernel (NTK) limit~\citep{jacot2020neural}. 
 IPM GAN discriminators are trained jointly with the generator in a min-max game. Adversarial training is challenging, and can suffer from instability, mode collapse, and misconvergence~\citep{xiao2022tackling,binkowski2021demystifying,li2017mmd,arora2017generalization,kodali2017convergence,salimans2016improved}. 
 Note that once a GAN has been trained, the samples can be refined via MCMC sampling in the generator latent space \citep[e.g., using kinetic Langevin dynamics; see][]{ansari2021refining,che2021gan,arbel2021generalized}.

\paragraph{Discriminator flows for generative modeling.} Wasserstein Gradient flows~\citep{ambrosioGradientFlowsMetric2008,santambrogio2015optimal} applied to a GAN discriminator are informally called \emph{discriminator flows}, see \citep{franceschi2023unifying}. A number of recent works have focused on replacing a GAN generator by a discriminator flow. \citet{fan2022variational} propose a discretisation of JKO~\citep{jordan1998} scheme to define a Kullback-Leibler (KL) divergence gradient flow. Other approaches have used a discretized interactive particle-based approach instead of JKO, similar to \eqref{eq:discrete_dynamics_new}.  \citet{heng2023deep,franceschi2023unifying} build such a flow based on f-divergences, 
whereas \citet{franceschi2023unifying} focuses on $\MMD$ gradient flow.
In all these works, an explicit generator is replaced by a corresponding discriminator flow. The sampling process during training is as follows: Let $Y_k$ be the samples produced at training iteration $k$ by the gradient flow $\mathcal{F}_{k}$ induced by the discriminator $\mathcal{D}_{k}$ applied to samples $Y_{k-1}$ from the previous iteration. We denote this by $Y_k \leftarrow \mathcal{F}_{k}(\mathcal{D}_{k}, Y_{k-1})$. Then, the discriminator at iteration $k+1$ is trained on samples $Y_k$. 
A challenge of this process is that the training sample for the next discriminator will be determined by the previous discriminators, and thus the generation process is still adversarial: particle transport minimizes the previous discriminator value, and the subsequent discriminator is maximized on these particles. 
Consequently, it is difficult to control or predict the overall sample trajectory from the initial distribution to the target, which might explain the performance shortfall of these methods in image generation settings.
By contrast, we have explicit control over the training particle trajectory via the forward noising diffusion process.

On top of that, these approaches \citep[except for][]{heng2023deep} require to store all intermediate discriminators $\mathcal{D}_{1},\ldots,\mathcal{D}_{N}$ throughout training ($N$ is the total number of training iterations). These discriminators are then used to produce new samples by applying the sequence of gradient flows $\mathcal{F}_{N}(\mathcal{D}_{N}, \cdot) \circ \ldots \circ \mathcal{F}_{1}(\mathcal{D}_{1}, \cdot)$ to $Y_0$ sampled from the initial distribution. This creates a large memory overhead.

An alternative is to use pretrained features obtained elsewhere or a fixed kernel with empirically selected hyperparameters
\citep[see][]{hertrich2023generative,hagemann2023posterior,altekruger2023neural}
, however this limits the applicability of the method. To the best of our knowledge, our approach is the first to demonstrate the possibility to train a discriminator without adversarial training, such that this discriminator can then be used to produce samples with a gradient flow. Unlike the alternatives, our approach does not require to store intermediate discriminators.

\paragraph{$\MMD$ for diffusion refinement/regularization.} $\MMD$ has been used to either regularize training of diffusion models~\citep{li2024error} or to finetune them~\citep{aiello2023fast} for fast sampling. The $\MMD$ kernel in these works has the form \eqref{eq:mmd_gan_kernel} with  Inception features~\citep{szegedy2014going}. Our method removes the need to use pretrained features by training th $\MMD$ discriminator.

\paragraph{Diffusion models.} Diffusion models~\citep{sohldickstein2015deep,ho2020denoising,song2020score} represent a powerful new family of generative models due to their strong empirical performance in many domains~\citep{saharia2022photorealistic,le2023voicebox,ho2022imagen,watson2022broadly,poole2022dreamfusion}. Unlike GANs, diffusion models do not require adversarial training. At training time, a denoiser is learned for multiple noise levels. As noted above, our work borrows from the training of diffusion models, as we train a discriminator on multiple noise levels of the forward diffusion process \citep{ho2020denoising}. This gives better control of the training samples for the (noise adapted) discriminator than using an incompletely trained GAN generator.

\section{Experiments}
\label{sec:experiments}

\begin{figure*}[t]
    \centering
    \includegraphics[width=1.0\textwidth]{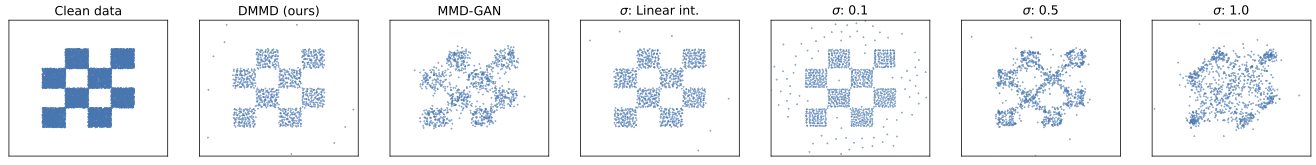}
    \vspace*{-5mm}
    \caption{Samples from $\MMD$ Gradient flow with different parameters for the RBF kernel~\eqref{eq:noise_conditional_rbf_kernel}.}
    \label{fig:2d_samples}
\end{figure*}

\subsection{Understanding $\DMMD$ behavior in 2-D}
\label{sec:toy_experiments}
\begin{figure}[!htb]
    \centering
    \includegraphics[scale=0.2]{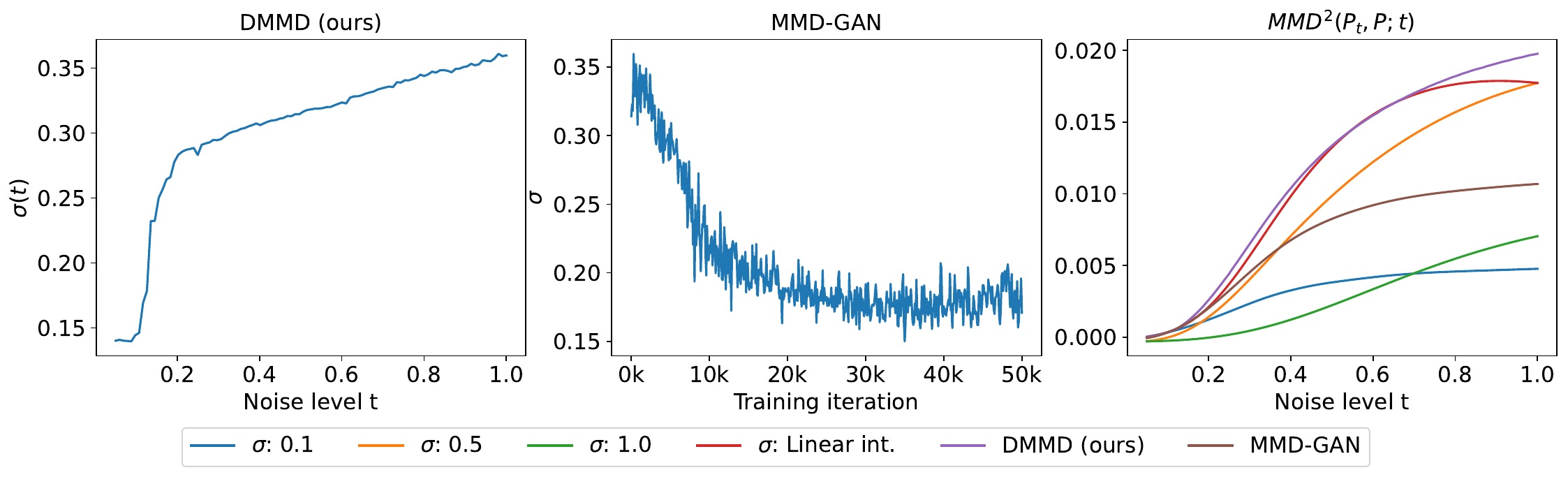}
    \caption{\textbf{Qualitative behaviour of $\MMD$ discriminators}. Left, learned RBF kernel~\eqref{eq:noise_conditional_rbf_kernel} widths $\sigma(t)$ as a function of noise level $t$. Center, parameter $\sigma$ for $\MMD$-GAN as function of training iteration. Right, $\MMD^2(P_t, P; t)$ for different methods.
    }
    \label{fig:2d_qualitative}
\end{figure}
The aim of the experiments in this section is to get an understanding of the behavior of $\DMMD$ described in Section~\ref{sec:method}. We expect $\DMMD$ to mimic GAN discriminator training via noise conditional discriminator learning.
To see whether this manifests in practice, we design a toy experiment using Radial Basis Function (RBF) kernel for $\MMD$
\begin{equation}
    \label{eq:noise_conditional_rbf_kernel}
    k_t(x, y) = \exp[-\|x - y\|^2 / (2 \sigma^2(t; \theta))] ,
\end{equation}
where the noise dependent kernel width function $\sigma(\cdot; \theta): [0,1] \rightarrow [0,+\infty)$ is parameterized by $\theta \in \rset^K$. 
This parameter controls the coarseness of the $\MMD$ discriminator.


We consider a simple checkerboard 2-D dataset, see Figure~\ref{fig:2d_samples},~left. We learn noise-conditional kernel widths $\sigma(t;\theta)$ using a neural network ensuring that $\sigma(t;\theta)>0$.
As baselines, we consider $\MMD$-GAN with a trained generator and a discriminator with one learnable parameter $\sigma$.
On top of that, we consider $\MMD$ gradient flow with fixed values of $\sigma$ and a variant called \emph{linear interpolation} with manually chosen noise-dependent $\sigma(t) =  0.1 (1-t) + 0.5 t$. All experimental details are provided in Appendix~\ref{app:2d_toy_datasets}.


We report the learned RBF kernel widths for $\DMMD$ in Figure~\ref{fig:2d_qualitative}, left.
As expected, as noise level goes from high to low, the kernel width $\sigma(t)$ decreases. In Figure~\ref{fig:2d_qualitative}, center, we show the learned $\MMD$-GAN kernel width parameter $\sigma$ as a function of training iterations. As expected, when the training progresses, this parameter decreases, since the corresponding generator produces samples, close to the target distribution. The behaviors of $\DMMD$ and $\MMD$-GAN are quite similar. Interestingly, the range of values for the kernel widths is also similar. This highlights our point that $\DMMD$ mimics the training of a GAN discriminator. The exact dynamics for $\sigma(t)$ in $\DMMD$ depends on the parameters of the forward diffusion process \eqref{eq:forward_diffusion}. The sharp phase transition is consistent with the phase transition highlighted in Section~\ref{sec:theoretical_justification}. In addition, we report $\MMD^2(P_t, P; t)$ for different methods in Figure~\ref{fig:2d_qualitative}, right. We observe that the behavior of $\DMMD$ is close to \emph{linear interpolation} variant, but is more nuanced for higher noise levels.
Finally, we report the corresponding samples in Figure~\ref{fig:2d_samples}. We see that $\DMMD$ produces visually better samples than the other baselines. For RBF kernel, we noticed the presence of outliers. The amount of outliers generally depends on the kernel, see Appendix of \citep{hertrich2023generative} for more details.

\subsection{Image generation}
We study the performance of $\DMMD$ on unconditional image generation of CIFAR10~\citep{krizhevsky2009learning}. We use the same forward diffusion process as in \citep{ho2020denoising} to produce noisy images. We use a U-Net~\citep{ronneberger2015unet} backbone for discriminator feature network $\phi(x, t; \theta)$, with a slightly different architecture from the one used in \citep{ho2020denoising}, see Appendix~\ref{app:experiments}. For all the image-based experiments, we use linear base kernel~\eqref{eq:linear_kernel_mmd}. We explored using other kernels such as RBF and Rational Quadratic (RQ), but did not find an improvement in performance. We use FID~\citep{heusel2018gans} and Inception Score~\citep{salimans2016improved} for evaluation, see Appendix~\ref{app:experiments}.
Unless specified otherwise, we use the number $\Npart = 200$ of particles for Algorithm~\ref{alg:sampling_from_gradient_flow}. We provide ablation over the number of particles in Appenidx~\ref{app:number_of_particles}.
The total number of iterations for $\DMMD$ equals to $T \times \Nfixed$, where $T$ is the number of noise levels and $\Nfixed$ is the number of steps per noise level. To be consistent with diffusion models, we call this number \emph{number of function evaluations} (NFE). For $\DMMD$, we report performance with different NFEs.

As baselines we consider our implementation of $\MMD$-GAN~\citep{binkowski2021demystifying} with linear base kernel and DDPM~\citep{ho2020denoising} using the same neural network backbones as for $\DMMD$. We also report results from the original papers. On top of that, we consider baselines based on \emph{discriminator flows}. JKO-Flow~\citep{fan2022variational}, which uses JKO~\citep{jordan1998} scheme for the KL gradient flow. Deep Generative Wasserstein Gradient Flows (DGGF-KL)~\citep{heng2023deep}, which uses particle-based approach (similar to \eqref{eq:discrete_dynamics_new}) for the KL gradient flow.
These approaches use adversarial training to train discriminators, see Section~\ref{sec:related_work} for more details.
On top of that, we consider Generative Sliced $\MMD$ Flows with Riesz Kernels (GS-$\MMD$-RK)~\citep{hertrich2023generative} which uses similar particle based approach to DGGF-KL to construct $\MMD$ flow,
but uses fixed (kernel) discriminator.
On top of that, we report results using a discriminator flow defined on a trained $\MMD$-GAN discriminator which we call $\MMD$-GAN-Flow.
More details on experiments are given in Appendix~\ref{app:experiments}. The results are provided in Table~\ref{table:fids}.
\begin{table}[!htbp]
\small
\centering
\caption{\textbf{Unconditional image generation on CIFAR-10}. For $\MMD$ GAN (orig.), we used mixed-RQ kernel described in \citep{binkowski2021demystifying}. "Orig." stands for the original paper and "impl." stands for our implementation. For JKO-Flow~\citep{fan2022variational}, the NFE is taken from their Figure 12.} 
\label{table:fids}
\begin{tabular}{lccc}
\toprule
Method & FID & Inception Score & NFE \\
\midrule
MMD GAN (orig.) &  39.90 & 6.51 & - \\
MMD GAN (impl.) & 13.62 & 8.93 & - \\
DDPM (orig.) & 3.17 & 9.46 & 1000 \\
DDPM (impl.) & 5.19 & 8.90 & 100 \\
\midrule
 \multicolumn{4}{c}{\textbf{Discriminator flow baselines}} \\
 \midrule
 DGGF-KL & 28.80 & - & 110 \\
 JKO-Flow & 23.10 & 7.48 & $\sim 150$ \\
 \midrule
 \multicolumn{4}{c}{\textbf{MMD flow baselines}} \\ 
 \midrule
 MMD-GAN-Flow & 450 & 1.21 & 100 \\ 
 GS-MMD-RK & 55.00 & - & 86 \\
\midrule
DMMD (ours) & \textbf{8.31} & \textbf{9.09} & 100 \\
DMMD (ours) & \textbf{7.74} & \textbf{9.12} & 250 \\
\end{tabular}
\end{table}
We see that $\DMMD$ achieves better performance than the $\MMD$ GAN. As expected, $\MMD$-GAN-Flow does not work at all. This is because the $\MMD$-GAN discriminator at convergence was trained on samples close to the target distribution. Making a parallel with RBF kernel experiment from Section~\ref{sec:toy_experiments}, this means that the gradient of $\MMD$ will be very small on samples far away from the target distribution. This highlights the benefit of adaptive $\MMD$ discriminators. Moreover, 
we also see that $\DMMD$ performs better than GS-MMD-RK, which uses fixed kernel. This highlights the advantage of learning discriminator features in $\DMMD$.
We see that $\DMMD$ achieves superior performance compared to other discriminator flow baselines. We believe that one of the reasons why these methods perform worse than $\DMMD$ consists in the need to use the adversarial training, which makes the hyperparameters choice tricky. $\DMMD$ on the other hand, relies on a simple non-adversarial training procedure from Algortihm~\ref{alg:mmd_training}.
Finally, we see that DDPM performs better than $\DMMD$. This is not surprising, since both, U-Net architecture and forward diffusion process~\eqref{eq:forward_diffusion} were optimized for DDPM performance.
Nevertheless, $\DMMD$ demonstrates strong empirical performance as a discriminator flow method trained without adversarial training.
The samples from our method are provided in Appendix~\ref{app:samples}.


\paragraph{Approximate sampling.} We run approximate $\MMD$ gradient flow \eqref{eq:approximate_mmd_grad_flow} with the same discriminator as for $\DMMD$. We call this variant $a$-$\DMMD$, where $a$ stands for \emph{approximate}. On top of that, we use denoising procedure described in Section~\ref{sec:sampling}. Starting from the samples given by $a$-$\DMMD$, we do $2$ gradient flow steps with higher learning rate using either approximate gradient flow~\eqref{eq:approximate_mmd_grad_flow}, which we call $a$-$\DMMD$-$a$, or exact gradient flow~\eqref{eq:adaptive_mmd_grad_flow} applied to a single particle, which we call $a$-$\DMMD$-$e$, $e$ stands for \emph{exact}. On top of that, we apply the denoising to $\DMMD$, which we call $\DMMD$-$e$.
Results are provided in Table~\ref{table:number_of_particles}. We observe that $a$-$\DMMD$ performs worse than $\DMMD$, which is as expected. Applying a denoising step improves performance of $a$-$\DMMD$, bringing it closer to $\DMMD$.
This suggests that the approximation~\eqref{eq:approximation} moves the particles close to the target distribution; but once close to the target, a more refined procedure is required. By contrast, we see that denoising helps $\DMMD$ only marginally. This suggests that the \emph{exact} noise-conditional witness function~\eqref{eq:witness_function_empirical} accurately captures fine detais close to the target distribution.
\begin{table}[!htbp]
\small
\centering
\caption{\textbf{Approximate sampling} performance on CIFAR10.} 
\label{table:approximate}
\begin{tabular}{llll}
\toprule
Method & FID & Inception Score & NFE \\
\midrule
$\DMMD$ & 8.31 & 9.09 & 100 \\
$\DMMD$-$e$ & \textbf{8.21} & 8.99 & 102 \\
\midrule
$a$-$\DMMD$ & 24.86 & 9.10 & 50 \\
$a$-$\DMMD$-$e$ & \textbf{9.185} & 8.70 & 52 \\
$a$-$\DMMD$-$a$ & 11.22 & 9.00 & 52 \\
\end{tabular}
\vspace{-2.5mm}
\end{table}
\subsection{Results on CELEB-A, LSUN-Church and MNIST}
\label{sec:additional_results}

Besides CIFAR-10, we study the performance of $\DMMD$ on MNIST~\citep{lecun1998gradient},  CELEB-A (64x64~\citep{liu2015deep} and  LSUN-Church (64x64)~\citep{yu2016lsun}. For MNIST and CELEB-A, we consider the same splits and evaluation regime as in \citep{franceschi2023unifying}. For LSUN Church, the splits and the evaluation regime are taken from \citep{ho2020denoising}. For more details, see Appendix~\ref{app:additional_datasets}. The results are provided in Table~\ref{table:additional_fids}. In addition to $\DMMD$, we report the performance of \emph{Discriminator flow} baseline from \citep{franceschi2023unifying} with numbers taken from the corresponding paper. We see that $\DMMD$ performance is significantly better compared to the discriminator flow, which is consistent with our findings on CIFAR-10. The corresponding samples are provided in Appendix~\ref{app:additional_datasets_samples}.

\begin{table}[!htbp]
\small
\centering
\caption{\textbf{Unconditional image generation on additional datasets}. The metric used is FID. The number of gradient flow steps for $\DMMD$ is 100.} 
\label{table:additional_fids}
\begin{tabular}{lcc}
\toprule
Dataset & $\DMMD$ & Disc. flow~\citep{franceschi2023unifying} \\
\midrule
MNIST & \textbf{3.0} & 4.0 \\
CELEB-A & \textbf{8.3} & 41.0 \\
LSUN & \textbf{6.1} & - \\
\end{tabular}
\end{table}

\section{Conclusion}
\label{sec:conclusion}

In this paper we have presented a method to train a noise conditional discriminator without adversarial training, using a forward diffusion process.
We use this noise conditional discriminator to generate samples using a noise adaptive MMD gradient flow. We provide  theoretical insight into why an adaptive gradient flow can provide faster convergence than the non-adaptive variant. We demonstrate strong empirical performance of our method on uncoditional image generation of CIFAR10,
as well as on  additional, similar image datasets.
We propose a scalable approximation of our approach which has close to the original empirical performance.

A number of questions remain open for future work.  The empirical performance of $\DMMD$ will be of interest in regimes where diffusion \final{models could be ill-behaved, such as in generative modeling on Riemannian manifolds; as well as on larger datasets such as ImageNet.}  $\DMMD$ provides a way of training a discriminator, which may be applicable in other areas where a domain-adaptive discriminator might be required.
Finally, it will be of interest to establish theoretical foundations for $\DMMD$ in general settings, and to derive convergence results for the associated flow.








\section{Impact statement}

In  recent years, generative modeling has undergone a period of rapid and transformative progress. Diffusion models play a pivotal role in driving this revolution with applications ranging from image synthesis to protein modeling. Recent works have sought ways to combined the strengths of both GANs and diffusion models. In this spirit, we propose in this work a new methodology for generative modeling, demonstrating that noise-adaptive discriminators may be defined to produce such flows without adversarial training. We believe that our work can shed light on the key components of these successful generative models and pave the way for further improvements of discriminative flow methods. 





\nocite{langley00}

\bibliography{main}
\bibliographystyle{icml2024}

\newpage
\appendix
\onecolumn

\section{Organization of the supplementary material}

In Section~\ref{app:dmmd_training_and_sampling}, we describe in details the training and sampling procedures for $\DMMD$. In Section~\ref{app:2d_toy_datasets}, we describe more details for the 2d experiments. In Section~\ref{app:d_kale_flow}, we provide more details about $\DKALE$-Flow method. In Section~\ref{app:experiments}, we provide experimental details for the image datasets. In Section~\ref{section:proof_theoretical_justification}, we provide proof for the theoretical results described in Section~\ref{sec:theoretical_justification} from the main section of the paper. Finally, in Section~\ref{app:all_samples} we present the samples from $\DMMD$ on different image datasets.

\section{$\DMMD$ training and sampling}
\label{app:dmmd_training_and_sampling}
\subsection{MMD discriminator}
Let $\mathcal{X} \subset \rset^{D}$ and $\mathcal{P}(\mathcal{X})$ be the set of probability distributions defined on $\mathcal{X}$. Let $P \in \mathcal{P}(\mathcal{X})$ be the \textit{target} or data distribution and $Q_{\psi} \in \mathcal{P}(\mathcal{X})$ be a distribution associated with a \emph{generator} parameterized by $\psi \in \rset^{L}$. Let $\mathcal{H}$ be Reproducing Kernel Hilbert Space (RKHS), see \citep{learning_with_kernels} for details, for some kernel $k : \mathcal{X} \times \mathcal{X} \rightarrow \rset$.
Maximum Mean Discrepancy (MMD)~\citep{gretton12a} between $Q_{\psi}$ and $P$ is defined as $\MMD(Q_{\psi}, P) = \sup_{f \in \mathcal{H}} \{\mathbb{E}_{Q_{\psi}} [f(X)] - \mathbb{E}_{P} [f(X)] \}$.
Given $X^N = \{x_{i}\}_{i=1}^{N} \sim Q_{\psi}^{\otimes N}$ and $Y^M = \{y_{i}\}_{i=1}^{M} \sim P^{\otimes M}$, an unbiased estimate of $\MMD^2$~\citep{gretton12a} is given by
\begin{align}
    \label{eq:unbiased_mmd_app}
    &\textstyle
    \MMD^2_u[X^N, Y^M] = \tfrac{1}{N (N-1)} \sum_{i \neq j}^{N} k(x_i, x_j) + \\
    &\textstyle
    \tfrac{1}{M (M-1)} \sum_{i \neq j}^{M} k(y_i, y_j) -
    \tfrac{2}{N M} \sum_{i=1}^{N} \sum_{j=1}^{M} k(x_i, y_j) . \nonumber
\end{align}
In MMD GAN~\citep{binkowski2021demystifying,li2017mmd}, the kernel in the objective~\eqref{eq:unbiased_mmd_app} is given as
\begin{equation}
    \label{eq:mmd_gan_kernel_app}
    \textstyle
    k(x, y) = \kbase(\phi(x; \theta), \phi(y; \theta)),
\end{equation}
where $\kbase$ is a base kernel and $\phi(\cdot; \theta): \mathcal{X} \rightarrow \rset^{K}$ are neural networks \emph{discriminator} features with parameters $\theta \in \rset^{H}$. We use the modified notation of $\MMD^2_u[X^N, Y^M; \theta]$ for equation~\eqref{eq:unbiased_mmd_app} to highlight the functional dependence on the discriminator parameters. $\MMD$ is an instance of Integral Probability Metric (IPM) (see \citep{arjovsky2017wasserstein}) which is well defined on distributions with disjoint support unlike f-divergences~\citep{nowozin2016fgan}. An advantage of using MMD over other IPMs (see for example, Wasserstein GAN~\citep{arjovsky2017wasserstein}) is the flexibility to choose a kernel $k$. Another form of $\MMD$ is expressed as a norm of a \emph{witness function}
\begin{equation}
    \MMD(Q_{\psi}, P) = \sup_{f \in \mathcal{H}} \{\mathbb{E}_{Q_{\psi}} [f(X)] - \mathbb{E}_{P} [f(X)] \} = \|f_{Q_{\psi}, P} \|_{\mathcal{H}},
\end{equation}
where the witness function $f_{Q_{\psi}, P}$ is given as
\begin{equation}
    \label{eq:witness_function_app}
    f_{Q_{\psi}, P}(z) = \int k(x, z) d Q_{\psi} - \int k(y, z) d P(y)
\end{equation}
Given two sets of samples $X^N = \{x_i\}_{i=1}^{N} \sim Q_{\psi}^{\otimes N}$ and $Y^M = \{y_i\}_{i=1}^{M} \sim P^{\otimes M}$, and the kernel \eqref{eq:mmd_gan_kernel_app}, the empirical witness function is given as
\begin{equation}
    \hat{f}_{Q_{\psi}, P}(z) = \frac{1}{N} \sum_{i=1}^{N} \kbase(\phi(x_i; \theta), \phi(z; \theta)) - \frac{1}{M} \sum_{j=1}^M \kbase(\phi(y_j; \theta), \phi(z; \theta))
\end{equation}
The $\ell_2$ penalty~\citep{binkowski2021demystifying} is defined as
\begin{equation}
    \label{eq:l2penalty_app}
    \mathcal{L}_{\ell_2}(\theta) = \frac{1}{N} \sum_{i=1}^N \|\phi(x_i; \theta)\|^2_2 + \frac{1}{N} \sum_{i=1}^N \|\phi(y_i; \theta)\|^2_2  
\end{equation}
Assuming that $M=N$ and following \citep{binkowski2021demystifying,gulrajani2017improved}, for $\alpha_i \sim U[0,1]$, where $U[0,1]$ is a uniform distribution on $[0,1]$, we construct $z_i = x_i \alpha_i + (1 - \alpha) y_i$ for all $i=1,\ldots,N$. Then, the gradient penalty~\citep{binkowski2021demystifying,gulrajani2017improved} is defined as
\begin{equation}
    \label{eq:gp_app}
    \mathcal{L}_{\nabla}(\theta) = \frac{1}{N} \sum_{i=1}^{N} (\| \nabla \hat{f}_{Q_{\psi}, P} (z_i) \|_2 - 1)^2
\end{equation}

We denote by $\mathcal{L}(\theta)$ the $\MMD$ discriminator loss given as
\begin{align}
    \mathcal{L}(\theta) = -\MMD^2_u[X^N, Y^M; \theta] = \tfrac{1}{N (N-1)} \sum_{i \neq j}^{N} \kbase(\phi(x_i; \theta), \phi(x_j; \theta)) + \tfrac{1}{M (M-1)} \sum_{i \neq j}^{M} \kbase(\phi(y_i; \theta), \phi(y_j; \theta)) \\
    &\textstyle
    - \tfrac{2}{N M} \sum_{i=1}^{N} \sum_{j=1}^{M} \kbase(\phi(x_i; \theta), \phi(y_j; \theta))
\end{align}

Then, the total loss for the discriminator on the two samples of data assuming that $N=M$ is given as
\begin{equation}
    \calLtot(\theta) = \calL(\theta) + \lambda_{\nabla} \calL_{\nabla}(\theta) + \lambda_{\ell_2} \calL_{\ell_2}(\theta),
\end{equation}
for some constants $\lambda_{\nabla} \geq 0$ and $\lambda_{\ell_2} \geq 0$.

\subsection{Noise-dependent $\MMD$}
\label{app:noise_dependent_mmd}

In Section~\ref{sec:method}, we describe the approach to train $\MMD$ discriminator from forward diffusion using noise-dependent discriminators. For that, we assume that we are given a noise level $t \sim U[0,1]$ where $U[0,1]$ is a uniform distribution on $[0,1]$, and a set of clean data $X^N = \{x^i\}_{i=1}^{N} \sim P^{\otimes N}$. Then we produce a set of noisy samples $x^i_t$ using forward diffusion process~\eqref{eq:forward_diffusion}. We denote these samples by $X^N_t =\{ x^i_t \}_{i=1}^{N}$. We define noise conditional kernel
\begin{equation}
    \label{eq:noise_cond_kernel_app}
    k(x, y; t, \theta) = \kbase(\phi(x, t; \theta), \phi(y, t; \theta)),
\end{equation}
with noise conditional features $\phi(x, t; \theta)$. This allows us to define the noise conditional discriminator loss
\begin{align}
    \label{eq:noise_cond_mmd_app}
    \mathcal{L}(\theta, t) = -\MMD^2_u[X^N, X^N_t, t, \theta] = \tfrac{1}{N (N-1)} \sum_{i \neq j}^{N} \kbase(\phi(x^i_t; t, \theta), \phi(x^j_t; t, \theta)) + \tfrac{1}{N (N-1)} \sum_{i \neq j}^{N} \kbase(\phi(x^i; t, \theta), \phi(x^j; t, \theta)) \\
    &\textstyle
     - \tfrac{2}{N^2} \sum_{i=1}^{N} \sum_{j=1}^{N} \kbase(\phi(x^i; t, \theta), \phi(x^j_t; t, \theta))
\end{align}
The noise conditional $\ell_2$ penalty is given as 
\begin{equation}
    \mathcal{L}_{\ell_2}(\theta, t) = \frac{1}{N} \sum_{i=1}^N \|\phi(x^i_t; t, \theta)\|^2_2 + \frac{1}{N} \sum_{i=1}^N \|\phi(x^i; t, \theta)\|^2_2  
\end{equation}
The noise conditional gradient penalty is given as 
\begin{equation}
    \mathcal{L}_{\nabla}(\theta, t) = \frac{1}{N} \sum_{i=1}^{N} (\| \nabla \hat{f}_{P, t} (z_i) \|_2 - 1)^2,
\end{equation}
where $z_i = \alpha_i x^i_t + (1-\alpha_i) x^i$ for $\alpha_i \sim U[0,1]$ and the noise conditional witness function
\begin{equation}
    \label{eq:noise_dependent_witness_app}
    \hat{f}_{P, t}(z) = \frac{1}{N} \sum_{i=1}^{N} \kbase(\phi(x^t_i; t, \theta), \phi(z; \theta)) - \frac{1}{N} \sum_{j=1}^N \kbase(\phi(x_i; t, \theta), \phi(z; \theta))
\end{equation}
Therefore, the total noise conditional loss is given as
\begin{equation}
    \label{eq:noise_conditional_total_loss_app}
    \calLtot(\theta, t) = \calL(\theta, t) + \lambda_{\nabla} \calL_{\nabla}(\theta, t) + \lambda_{\ell_2} \calL_{\ell_2}(\theta, t),
\end{equation}
for some constants $\lambda_{\nabla} \geq 0$ and $\lambda_{\ell_2} \geq 0$.

\subsection{Linear kernel for scalable $\MMD$}
\label{app:linear_optimization}

Computational complexity of \eqref{eq:noise_conditional_total_loss_app} is $O(N^2)$. Here, we assume that the base kernel is linear, i.e.
\begin{equation}
    \kbase(x, y) = \langle x, y \rangle
\end{equation}
This allows us to simplify the $\MMD$ computation~\eqref{eq:noise_cond_mmd_app} as
\begin{multline}
    \label{eq:mmd_optimized_app}
    \MMD^2_u[X^N, X^N_t, t, \theta] =
    \frac{1}{N (N-1) } \left( \bar{\phi_t}(X^N_t)^T \bar{\phi_t}(X^N_t) - \bar{\| \phi_t \|^2}^N (X_t) \right) +
    \frac{1}{N (N-1) } \left( \bar{\phi_t}(X^N)^T \bar{\phi_t}(X^N) - \bar{\| \phi_t \|^2}^N (Y) \right) \\
    - \frac{2}{N N}(\bar{\phi_t} (X^N_t)) ^T \bar{\phi_t}(X^N),
\end{multline}
where
\begin{align}
    \bar{\phi_t}(X^N_t) = \sum_{i=1}^N \phi(x^i_t; \theta_t) \\
    \bar{\phi_t}(X^N) = \sum_{j=1}^N \phi(x^i; \theta_t) \\
    \bar{\| \phi_t \|^2} (X^N_t) = \sum_{i=1}^N \| \phi(x^i_t; \theta_t) \|^2 \\
    \bar{\| \phi_t \|^2} (X^N) = \sum_{j=1}^N \| \phi(x^i; \theta_t) \|^2
\end{align}

Therefore we can pre-compute quantities $\bar{\phi_t}(X^N_t),\bar{\phi_t}(X^N),\bar{\| \phi_t \|^2} (X^N_t),\bar{\| \phi_t \|^2} (X^N)$ which takes $O(N)$ and compute $\MMD^2_u[X^N, X^N_t, t, \theta]$ in $O(1)$ time. This also leads $O(1)$ computation complexity for $\calL_{\ell_2}$ and $O(N)$ complexity for $\calL_{\nabla}$. This means that we simplify the computational complexity to $O(N)$ from $O(N^2)$.

At sampling, following \eqref{eq:adaptive_mmd_grad_flow} requires to compute  the witness function \eqref{eq:noise_dependent_witness_app} for each particle, which for a general kernel takes $O(N^2)$ in total. Using the linear kernel above, simplifies the complexity of the witness as follows
\begin{equation}
  \hat{f}_{P, t}(z) = \langle \bar{\phi_t}(Z^N) - \bar{\phi_t}(X^N), \phi(z; \theta) \rangle,
\end{equation}
where $Z^N$ is a set of $N$ noisy particles. We can precompute $\bar{\phi_t}(Z^N)$ in $O(N)$ time. Therefore one iteration of a witness function will take $O(1)$ time and for $N$ noisy particles it makes $O(N)$.

\subsection{Approximate sampling procedure}
\label{app:approximate_sampling}

In this section we provide an algorithm for the approximate sampling procedure. The only change with the original Algorithm~\ref{alg:sampling_from_gradient_flow} is the approximate witness function
\begin{equation}
    \hat{f}^\star_{P_t, P}(z) = \langle \phi(z, t; \theta^\star), \bar{\phi}(X_t, t, \theta^\star) - \bar{\phi}(X_0, t, \theta^\star) \rangle,
\end{equation}
where
\begin{align}
    \label{eq:precomputed_approximate_app}
    \bar{\phi}(X_0, t, \theta^\star = \frac{1}{N} \sum_{i=1}^{N} \phi(x^i_0, t; \theta^\star) \\
    \bar{\phi}(X_t, t, \theta^\star = \frac{1}{N} \sum_{i=1}^{N} \phi(x^i_t, t; \theta^\star)
\end{align}
Here $x^i_0,i=1,\ldots,N$ correspond to the whole training set of clean samples and $x^i_t,i=1,\ldots,$ correspond to the noisy version of these clean samples produced by the forward diffusion process~\ref{eq:forward_diffusion} for a given noise level $t$. These features can be precomputed once for every noise level $t$. The resulting algorithm is given in Algorithm~\eqref{alg:sampling_from_gradient_flow_approximate_app}. Another crucial difference with the original algorithm is the ability to run it for each particle $Z$ independently.

\begin{algorithm}[tb]
   \caption{Approximate noise-adaptive $\MMD$ gradient flow for a single particle}
   \label{alg:sampling_from_gradient_flow_approximate_app}
\begin{algorithmic}
   \STATE {\bfseries Inputs:} $T$ is the number of noise levels
   \STATE $\tmax, \tmin$ are maximum and minimum noise levels
   \STATE $\Nfixed$ is the number of gradient flow steps per noise level
   \STATE $\eta > 0$ is the gradient flow learning rate
   \STATE $\bar{\phi}(X_0, t, \theta^\star)$ - precomputed clean features for all $t=1,\ldots,T$ with \eqref{eq:precomputed_approximate_app}
   \STATE $\bar{\phi}(X_t, t, \theta^\star)$ - precomputed noisy features for all $t=1,\ldots,T$ with \eqref{eq:precomputed_approximate_app}
   \STATE {\bfseries Steps:} Sample initial noisy particle $Z \sim \mathrm{N}(0,\Id)$
   \FOR{$i=T$ {\bfseries to} $0$}
     \STATE Set the noise level $t = i \Delta t$ and $Z^t_0 = Z$
     \FOR{$n=0$ {\bfseries to} $\Nfixed - 1$}
        \STATE $Z^{t}_{n+1} = Z^{t}_{n} - \eta \langle \nabla_z \phi(Z^{t}_{n}, t; \theta^\star), \bar{\phi}(X_t, t, \theta^\star) - \bar{\phi}(X_0, t, \theta^\star) \rangle$
     \ENDFOR
     \STATE Set $Z = Z^{t}_{N}$
   \ENDFOR
   \STATE Output $Z$
\end{algorithmic}
\end{algorithm}

\section{Toy 2-D datasets experiments}
\label{app:2d_toy_datasets}

For the 2-D experiments, we train $\DMMD$ using Algorithm~\eqref{alg:mmd_training} for $\Niter=50000$ steps with a batch size of $B=256$ and a number of noise levels per batch equal to $\Nnoise=128$. The Gradient penalty constant $\lambda_{\nabla} = 0.1$ whereas the $\ell_2$ penalty is not used. To learn noise-conditional $\MMD$ for $\DMMD$, we use a $4$-layers MLP $g(t; \theta)$ with ReLU activation to encode $\sigma(t; \theta) = \sigma_{\text{min}} + \text{ReLU}(g(t; \theta))$ with $\sigma_{\text{min}}=0.001$, which ensures $\sigma(t; \theta) > 0$. The MLP layers have the architecture of $[64, 32, 16, 1]$. Before passing the noise level $t \in [0, 1]$ to the MLP, we use sinusoidal embedding similar to the one used in \citep{ho2020denoising}, which produces a feature vector of size $1024$. The forward diffusion process from \citep{ho2020denoising} have modified parameters such that corresponding $\beta_1=10^-4,\beta_T=0.0002$. On top of that, we discretize the corresponding process using only $1000$ possible noise levels, with $\tmin = 0.05$ and $\tmax=1.0$. At sampling time for the algorithm~\ref{alg:sampling_from_gradient_flow}, we use $\tmin=0.05,\tmax=1.0$, $\Nfixed=10$ and $T=100$. The learning rate $\eta = 1.0$. As basleines, we consider $\MMD$-GAN with a generator parameterised by a $3$-layer MLP with ELU activations. The architecture of the MLP is $[256, 256, 2]$. The initial noise for the generator is produced from a uniform distribution $U[-1,1]$ with a dimensionality of $128$. The gradient penalty coefficient equals to $0.1$. As for the discriminator, the only learnable parameter is $\sigma$. We train $\MMD$-GAN for $250000$ iterations with a batch size of $B=256$. Other variants of $\MMD$ gradient flow use the same sampling parameters as $\DMMD$.

\section{D-KALE-flow}
\label{app:d_kale_flow}

In this section, we describe the $\DKALE$-flow algorithm mentioned in Section~\ref{sec:f_divergences}. Let $\mathcal{X} \subset \rset^{D}$ and $\mathcal{P}(\mathcal{X})$ be the set of probability distributions defined on $\mathcal{X}$. Let $P \in \mathcal{P}(\mathcal{X})$ be the \textit{target} or data distribution and $Q \in \mathcal{P}(\mathcal{X})$ be some distribution. The KALE objective (see \citep{glaser2021kale}) is defined as
\begin{equation}
    \label{eq:kale_divergence_app}
    KALE(Q, P | \lambda) = (1+\lambda) \max_{h \in \mathcal{H}} \{ 1 + \int h dQ - \int e^{h} dP  - \frac{\lambda}{2} ||h||^2_{\mathcal{H}} \},
\end{equation}
where $\lambda \geq 0$ is a positive constant and $\mathcal{H}$ is the RKHS with a kernel $k$. In practice, KALE divergence \eqref{eq:kale_divergence_app} can be replaced by a corresponding parametric objective
\begin{equation}
    \label{eq:parametric_kale_app}
    KALE(Q, P | \lambda, \theta, \alpha) = (1+\lambda) \left( \int h(X; \theta, \alpha) dQ(X) - \int e^{h(Y; \theta, \alpha)} dP(Y) - \frac{\lambda}{2} ||\alpha||^2_{2} \right),
\end{equation}
where
\begin{equation}
    h(X; \theta, \alpha) = \phi(X; \theta)^T \alpha,
\end{equation}
with $\phi(X; \theta) \in \rset^{D}$ and $\alpha \in \rset^{D}$. The objective function \eqref{eq:parametric_kale_app} can then be maximized with respect to $\theta$ and $\alpha$ for given $Q$ and $P$. Similar to $\DMMD$, we consider a noise-conditional witness function
\begin{equation}
    h(x; t, \theta, \alpha, \psi) = \phi(x; t, \theta)^T \alpha({t}; \psi)
\end{equation}
From here, the noise-conditional KALE objective is given as
\begin{equation}
    \label{eq:noise_conditional_kale_app}
    \calL(\theta, \psi, t | \lambda) = KALE(P_{t}, P | \lambda, \theta, \alpha),
\end{equation}
where $P_t$ is the distribution corresponding to a forward diffusion process, see Section~\ref{sec:mmd_training}. Then, the total noise-conditional objective is given as
\begin{equation}
    \calLtot(\theta, \psi, t | \lambda) = \calL(\theta, \psi, t | \lambda) + \lambda_{\nabla} \calL_{\nabla}(\theta, \psi, t) + \lambda_{\ell_{2}} \calL_{\ell_{2}}(\theta, t),
\end{equation}
where gradient penalty has similar form to WGAN-GP~\citep{gulrajani2017improved}
\begin{equation}
    \calL_{\nabla}(\theta, \psi, t) = \mathbb{E}_{Z}(||\nabla_{Z} h(Z; t, \theta, \alpha, \psi)||_{2} - 1)^2,
\end{equation}
where $Z = \beta X + (1-\beta) Y$, $\beta \sim U[0,1]$, $X \sim P(X)$ and $Y \sim P(Y)$. The l2 penalty is given as
\begin{equation}
    \calL_{\ell_{2}}(\theta, t) = \frac{1}{2} \left( \mathbb{E}_{X \sim P(X)} ||\phi(X; t, \theta)||^2 + \mathbb{E}_{Y \sim P(Y)} ||\phi(Y; t, \theta)||^2\right)
\end{equation}
Therefore, the final objective function to train the discriminator is
\begin{equation}
    \calLtot(\theta, \psi | \lambda) = \mathbb{E}_{t \sim U[0, 1]} \left[ \calLtot(\theta, \psi, t | \lambda) \right]
\end{equation}
This objective function depends on RKHS regularization $\lambda$, on gradient penalty regularization $\lambda_{\nabla}$ and on l2-penalty regularization $\lambda_{\ell_{2}}$. Unlike in $\DMMD$, we do not use an explicit form for the witness function and do not use the RKHS parameterisation. On one hand, this allows us to have a more scalable approach, since we can compute $\KALE$ and the witness function for each individual particle. On the other hand, the explicit form of the witness function in $\DMMD$ introduces beneficial inductive bias. In $\DMMD$, when we train the discriminator, we only learn the kernel features, i.e. corresponding RKHS. In $\mathrm{D}$-$\KALE$, we need to learn both, the kernel features $\phi(x; t, \theta)$ as well as the RKHS projections $\alpha(t; \psi)$. This makes the learning problem for $\mathrm{D}$-$\KALE$ more complex. The corresponding noise adaptive gradient flow for $\KALE$ divergence is described in Algorithm~\ref{alg:d_kale_flow}. An advantage over original $\DMMD$ gradient flow is the ability to run this flow individually for each particle.

\begin{algorithm}[tb]
   \caption{Noise-adaptive $\KALE$ flow for single particle}
   \label{alg:d_kale_flow}
\begin{algorithmic}
   \STATE {\bfseries Inputs:} $T$ is the number of noise levels
   \STATE $\tmax, \tmin$ are maximum and minimum noise levels
   \STATE $\Nfixed$ is the number of gradient flow steps per noise level
   \STATE $\eta > 0$ is the gradient flow learning rate
   \STATE Trained witness function $h(\cdot; t, \theta^\star, \psi^\star)$
   \STATE {\bfseries Steps:} Sample initial noisy particle $Z \sim \mathrm{N}(0,\Id)$
   \STATE Set $\Delta t = (\tmax - \tmin) / T$
   \FOR{$i=T$ {\bfseries to} $0$}
     \STATE Set the noise level $t = \tmin + i \Delta t$ and $Z^t_0 = Z$
     \FOR{$n=0$ {\bfseries to} $\Nfixed - 1$}
        \STATE  $Z^{t}_{n+1} = Z^{t}_{n} - \eta \nabla h(Z^t_{n}; t, \theta^\star, \psi^\star)$
     \ENDFOR
     \STATE Set $Z = Z^{t}_{N}$
   \ENDFOR
   \STATE Output $Z$
\end{algorithmic}
\end{algorithm}

\section{Image generation experiments}
\label{app:experiments}

For the image experiments, we use CIFAR10~\citep{krizhevsky2009learning} dataset. We use the same forward diffusion process as in \citep{ho2020denoising}. As a Neural Network backbone, we use U-Net~\citep{ronneberger2015unet} with a slightly modified architecture from \citep{ho2020denoising}. Our neural network architecture follows the backbone used in \citep{ho2020denoising}. On top of that we output the intermediate features at four levels -- before down sampling, after down-sampling, before upsampling and a final layer. Each of these feature vectors is processed by a group normalization, the activation function and a linear layer producing an output vector of size $32$. To produce the output of a discriminator features, these four feature vectors are concatenated to produce a final output feature vector of size $128$. The noise level time is processed via sinusoidal time embedding similar to \citep{ho2020denoising}. We use a dropout of $0.2$. $\DMMD$ is trained for $\Niter=250000$ iterations with a batch size $B=64$ with number $\Nnoise=16$ of noise levels per batch. We use a gradient penalty $\lambda_{\nabla}=1.0$ and $\ell_{2}$ regularisation strength $\lambda_{\ell_2}=0.1$. As evaluation metrics, we use FID~\citep{heusel2018gans} and Inception Score~\citep{salimans2016improved} using the same evaluation regime as in \citep{ho2020denoising}. To select hyperparameters and track performance during training, we use FID evaluated on a subset of $1024$ images from a training set of CIFAR10.

For CIFAR10, we use random flip data augmentation.

In $\DMMD$ we have two sets of hyperparameters, one is used for training in Algorithm~\ref{alg:mmd_training} and one is used for sampling in Algorithm~\ref{alg:sampling_from_gradient_flow}. During training, we fix the sampling parameters and always use these to select the best set of training time hyperparameters. We use $\eta = 0.1$ gradient flow learning rate, $T=10$ number of noise levels, $\Npart=200$ number of noisy particles, $\Nfixed=5$ number of gradient flow steps per noise level, $\tmin=0.001$ and $\tmax=1 - 0.001$. We use a batch of $400$ clean particles during training. For hyperparameters, we do a grid search for $\lambda_{\nabla} \in \{ 0, 0.001, 0.01, 0.1, 1.0, 10.0 \}$, for $\lambda_{\ell_2} \in \{ 0, 0.001, 0.01, 0.1, 1.0, 10.0 \}$, for dropout rate $\{ 0, 0.1, 0.2, 0.3 \}$, for batch size $\{ 16, 32, 64 \}$. To train the model, we use the same optimization procedure as in \citep{ho2020denoising}, notably Adam~\citep{kingma2017adam} optimizer with a learning rate $0.0001$. We also swept over the the dimensionality of the output layer $32, 64, 128$, such that each of four feature vectors got the equal dimension. Moreover, we swept over the number of channels for U-Net $\{ 32, 64, 128 \}$ (the original one was $32$) and we found that $128$ gave us the best empirical results.

After having selected the training-time hyperparameters and having trained the model, we run a sweep for the sampling time hyperparameters over $\eta \in \{ 1, 0.5, 0.1, 0.04, 0.01 \}$, $T \in \{1, 5, 10, 50 \}$, $\Nfixed \in \{1, 5, 10, 50 \}$, $\tmin \in \{0.001, 0.01, 0.1, 0.2 \}$, $\tmax \in \{0.9, 1 - 0.001 \}$.
We found that the best hyperparameters for $\DMMD$ were $\eta=0.1$, $\Nfixed=10$, $T=10$, $\tmin=0.1$ and $\tmax=0.9$. On top of that, we ran a variant for $\DMMD$ with $T=50$ and $\Nfixed=5$.

For $a$-$\DMMD$ method, we used the same pretrained discriminator as for $\DMMD$ but we did an additional sweep over sampling time hyperparameters, because in principle these could be different. We found that the best hyperparameters for $a$-$\DMMD$ are $\eta=0.04$, $\tmin=0.2$, $\tmax=0.9$, $T=5$, $\Nfixed=10$.

For the denoising step, see Table~\ref{table:approximate}, for $\DMMD$-$e$, we used 2 steps of $\DMMD$ gradient flow with a higher learning rate $\eta^\star=0.5$ with $\tmax=0.1$ and $\tmin=0.001$. For $a$-$\DMMD$-$e$, we used 2 steps of $\DMMD$ gradient flow with a higher learning rate of $\eta^\star=0.5$ with $\tmax=0.2$ and $\tmin=0.001$. For $a$-$\DMMD$-$e$, we used 2 steps of $\DMMD$ gradient flow with a higher learning rate of $\eta^\star=0.1$ with $\tmax=0.2$ and $\tmin=0.001$. The only parameter we swept over in this experiment was this higher learning rate $\eta^\star$.

After having found the best hyperparameters for sampling, we run the evaluation to compute FID on the whole CIFAR10 dataset using the same regime as described in ~\citep{ho2020denoising}.

For $\MMD$-GAN experiment, we use the same discriminator as for $\DMMD$ but on top of that we train a generator using the same U-net architecture as for $\DMMD$ with an exception that we do not use the 4 levels of features. We use a higher gradient penalty of $\lambda_{\nabla}=10$ and the same $\ell_2$ penalty $\lambda_{\ell_2}=0.1$. We use a batch size of $B=64$ and the same learning rate as for $\DMMD$. We use a dropout of $0.2$. We train $\MMD$-GAN for $250000$ iterations. For each generator update, we do $5$ discriminator updates, following \citep{brock2019large}.

For $\MMD$-GAN-Flow experiment, we take the pretrained discriminator from $\MMD$-GAN and run a gradient flow of type~\eqref{eq:mmd_grad_flow_new} on it, starting from a random noise sampled from a Gaussian. We swept over different parameters such as learning rate $\eta$ and number of iterations $\Niter$. We found that none of our parameters led to any reasonable performance. The results in Table~\ref{table:fids} are reported using $\eta=0.1$ and $\Niter=100$.

\subsection{Additional datasets}
\label{app:additional_datasets}

We study performance of $\DMMD$ on additional datasets, MNIST~\citep{lecun1998gradient}, on CELEB-A (64x64~\citep{liu2015deep} and on LSUN-Church (64x64)~\citep{yu2016lsun}. For MNIST and CELEB-A, we use the same training/test split as well as the evaluation protocol as in \citep{franceschi2023unifying}. For LSUN-Church, For LSUN Church, we compute FID on 50000 samples similar to DDPM~\citep{ho2020denoising}. For MNIST, we used the same hyperparameters during training and sampling as for CIFAR-10 with NFE=100, see Appendix~\ref{app:experiments}. For CELEB-A and LSUN, we ran a sweep over $\lambda_{\ell_2} \in \{ 0, 0.001, 0.01, 0.1, 1.0, 10.0 \}$ and found that $\ell_2=0.001$ led to the best results. For sampling, we used the same parameters as for CIFAR-10 with NFE=100. The reported results in Table~\ref{table:additional_fids} are given with NFE=100.

\subsection{$\mathrm{D}$-$\KALE$-flow details}
\label{app:d_kale_flow-details}
We study performance of $\mathrm{D}$-$\KALE$-flow on CIFAR10. We use the same architectural setting as in $\DMMD$ with the only difference of adding an additional mapping $\alpha(t; \psi)$ from noise level to $D$ dimensional feature vector, which is represented by a 2 layer MLP with hidden dimensionality of $64$ and GELU activation function. We use batch size $B=256$, dropout rate equal to $0.3$. For sampling time parameters during training, we use $\eta=0.5$, total number of noise levels $T=20$, and number of steps per noise level $N_{s} = 5$. At training, we sweep over RKHS regularization $\lambda \in \{0, 1, 10, 100, 500, 1000, 2000 \}$, gradient penalty $\lambda_{\nabla} \in \{ 0, 0.1, 1.0, 10.0, 50.0, 100.0, 250.0, 500.0, 1000.0 \}$, l2 penalty in $\{0, 0.1, 0.01, 0.001 \}$.

\subsection{Number of particles ablation}
\label{app:number_of_particles}

\paragraph{Number of particles.} In Table~\ref{table:number_of_particles} we report performance of $\DMMD$ depending on the number of particles $\Npart$ at sampling time. As expected as the number of particles increases, the FID score decreases, but the overall performance is sensitive to the number of particles. This motivates the approximate sampling procedure from Section~\ref{sec:fastSampling}.
\begin{table}[!htbp]
\small
\centering
\caption{\textbf{Number of particles ablation}, FIDs on CIFAR10.}
\label{table:number_of_particles}
\begin{tabular}{cccc}
\toprule
$\Npart = 50$ & $\Npart = 100$ & $\Npart = 200$ \\
\midrule
9.76 & 8.55 & 8.31 \\
\end{tabular}
\end{table}

\subsection{Results with f-divergence}
\label{app:additional_results_f_divergence}

We study performance of $\mathrm{D}$-$\KALE$-Flow described in Section~\ref{sec:f_divergences} and Appendix~\ref{app:d_kale_flow}, in the setting of unconditional image generation for CIFAR-10. We compare against a GAN baseline which uses the $\KALE$ divergence in the discriminator, but has adversarially trained generator. More details are described in Appendix~\ref{app:d_kale_flow} and  Appendix~\ref{app:d_kale_flow-details}. The results are given in Table~\ref{table:f_divergence}. We see that unlike with $\DMMD$, $\mathrm{D}$-$\KALE$-Flow achieves worse performance than corresponding $\KALE$-GAN - indicating that the inductive bias provided by the generator may be more helpful in this case - this is a topic for future study. 


\begin{table}[!htbp]
\small
\centering
\caption{\textbf{Unconditional image generation on CIFAR-10} with $\KALE$-divergence. The number of gradient flow steps is $100$.} 
\label{table:f_divergence}
\begin{tabular}{lcc}
\toprule
Method & FID & Inception score \\
\midrule
$\mathrm{D}$-$\KALE$-Flow & 15.8 & 8.5 \\
$\KALE$-GAN & 12.7 & 8.7 \\
\end{tabular}
\end{table}

\section{Optimal kernel with Gaussian MMD}
\label{section:proof_theoretical_justification}

In this section, we prove the results of \Cref{sec:theoretical_justification}. We consider the following unnormalized Gaussian kernel
\begin{equation}
    k_\alpha(x,y) = \alpha^{-d} \exp[-\| x - y \|^2 / (2 \alpha^2)] . 
\end{equation}
For any $\mu \in \rset^d$ and $\sigma > 0$ we denote $\pi_{\mu, \sigma}$ the Gaussian distribution with mean $\mu$ and covariance matrix $\sigma^2 \Id$. We denote $\MMD^2_\alpha$ the $\MMD^2$ associated with $k_\alpha$. More precisely for any $\mu_1, \mu_2 \in \rset^d$ and $\sigma_1, \sigma_2 >0$ we have 
\begin{equation}
\label{eq:mmd_squared_appendix}
    \MMD^2_\alpha(\pi_{\mu_1, \sigma_1}, \pi_{\mu_2, \sigma_2}) = \mathbb{E}_{\pi_{\mu_1, \sigma_1} \otimes \pi_{\mu_1, \sigma_1}} \left[ k_\alpha(X,X') \right] - 2  \mathbb{E}_{\pi_{\mu_1, \sigma_1} \otimes \pi_{\mu_2, \sigma_2}} \left[ k_\alpha(X,Y) \right] + \mathbb{E}_{\pi_{\mu_2, \sigma_2} \otimes \pi_{\mu_2, \sigma_2}} \left[ k_\alpha(Y, Y') \right] .
\end{equation}
In this section we prove the following result.

\begin{proposition}
\label{prop:optimal_kernel_gradient_appendix}
For any $\mu_0 \in \rset^d$ and $\sigma >0$, let $\alpha^\star$ be given by 
\begin{equation}
    \textstyle \alpha^\star = \argmax_{\alpha \geq 0} \| \nabla_{\mu_0} \MMD^2_\alpha(\pi_{0, \sigma}, \pi_{\mu_0, \sigma})\| .
\end{equation}
Then, we have that 
\begin{equation}
\label{eq:optimal_alpha_appendix}
    \alpha^\star = \mathrm{ReLU}(\| \mu_0 \|^2/(d+2) - 2 \sigma^2 )^{1/2} . 
\end{equation}
\end{proposition}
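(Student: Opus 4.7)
The plan is to derive a closed-form expression for $\MMD^2_\alpha(\pi_{0,\sigma},\pi_{\mu_0,\sigma})$ via a Gaussian convolution identity, differentiate in $\mu_0$, and reduce the maximization over $\alpha$ to a one-dimensional problem in $\beta := \alpha^2 + 2\sigma^2$. The key computational step is the cross-expectation: since $k_\alpha(x,y) = (2\pi)^{d/2}$ times a Gaussian density in $x-y$ with covariance $\alpha^2 \Id$, and $X - Y \sim \pi_{\mu_1-\mu_2,\,(\sigma_1^2+\sigma_2^2)^{1/2}}$ for independent $X \sim \pi_{\mu_1,\sigma_1}$ and $Y \sim \pi_{\mu_2,\sigma_2}$, the convolution identity for Gaussian densities yields
\begin{equation*}
\mathbb{E}[k_\alpha(X,Y)] = (\alpha^2+\sigma_1^2+\sigma_2^2)^{-d/2}\exp\bigl(-\|\mu_1-\mu_2\|^2/(2(\alpha^2+\sigma_1^2+\sigma_2^2))\bigr).
\end{equation*}
Substituting into \eqref{eq:mmd_squared_appendix} with $\sigma_1=\sigma_2=\sigma$, the two self-expectations are equal and independent of $\mu_0$, and the cross term is subtracted twice, giving
\begin{equation*}
\MMD^2_\alpha(\pi_{0,\sigma},\pi_{\mu_0,\sigma}) = 2(\alpha^2+2\sigma^2)^{-d/2}\bigl[1 - \exp\bigl(-\|\mu_0\|^2/(2(\alpha^2+2\sigma^2))\bigr)\bigr].
\end{equation*}

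Differentiating in $\mu_0$ produces a gradient that is radial in $\mu_0$, with norm
\begin{equation*}
\|\nabla_{\mu_0}\MMD^2_\alpha(\pi_{0,\sigma},\pi_{\mu_0,\sigma})\| = 2\|\mu_0\|(\alpha^2+2\sigma^2)^{-d/2-1}\exp\bigl(-\|\mu_0\|^2/(2(\alpha^2+2\sigma^2))\bigr).
\end{equation*}
Setting $\beta := \alpha^2+2\sigma^2 \geq 2\sigma^2$, maximizing this over $\alpha \geq 0$ becomes the one-dimensional problem of maximizing $g(\beta) = \beta^{-(d/2+1)}\exp(-\|\mu_0\|^2/(2\beta))$ on $[2\sigma^2,\infty)$. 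The log-derivative $(\log g)'(\beta) = -(d/2+1)/\beta + \|\mu_0\|^2/(2\beta^2)$ has the unique zero $\beta^\star = \|\mu_0\|^2/(d+2)$, and since $g \to 0$ at both endpoints of $(0,\infty)$ (exponential decay at $0$, polynomial decay at $\infty$), this is the global maximizer on $(0,\infty)$. Projecting onto the feasible set then splits into two cases: if $\|\mu_0\|^2/(d+2) \geq 2\sigma^2$, the interior critical point is feasible and gives $\alpha^{\star 2} = \|\mu_0\|^2/(d+2) - 2\sigma^2$; otherwise $g$ is strictly decreasing on $[2\sigma^2,\infty)$ and the optimum is at the boundary $\alpha^\star = 0$. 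Both cases combine into \eqref{eq:optimal_alpha_appendix}.

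The only mildly delicate step is tracking the normalization $\alpha^{-d}$ of $k_\alpha$ carefully through the Gaussian convolution so that the prefactor $(\alpha^2+2\sigma^2)^{-d/2}$ (rather than $(2\pi(\alpha^2+2\sigma^2))^{-d/2}$) appears; after that the reduction to $g(\beta)$ and its optimization are routine single-variable calculus.
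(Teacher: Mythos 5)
Your proposal is correct and reaches the same closed form $\MMD^2_\alpha(\pi_{0,\sigma},\pi_{\mu_0,\sigma}) = 2(\alpha^2+2\sigma^2)^{-d/2}[1-\exp(-\|\mu_0\|^2/(2(\alpha^2+2\sigma^2)))]$ as the paper, followed by the same reduction to a one-variable optimization (you use $\beta=\alpha^2+2\sigma^2$ where the paper uses $t=1/(\alpha^2+2\sigma^2)$, which is cosmetic). The one genuine difference is how the cross-expectation is computed: the paper's \Cref{prop:general_D} evaluates $\int k_\alpha(x,y)\,\rmd\pi_{\mu_0,\sigma}(x)\,\rmd\pi_{\mu_1,\sigma}(y)$ by an explicit completion of the square in $\rset^{2d}$ (introducing $\hat\mu_0,\hat\mu_1$, a block precision matrix, and its determinant), whereas you observe that $k_\alpha$ is a function of $x-y$ proportional to a Gaussian density with covariance $\alpha^2\Id$ and that $X-Y$ is itself Gaussian, so the Gaussian convolution identity gives $\mathbb{E}[k_\alpha(X,Y)]=(\alpha^2+\sigma_1^2+\sigma_2^2)^{-d/2}\exp(-\|\mu_1-\mu_2\|^2/(2(\alpha^2+\sigma_1^2+\sigma_2^2)))$ in one line, and even for unequal variances. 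This is shorter, less error-prone, and subsumes the paper's \Cref{prop:value_D_itself} and \Cref{prop:value_D_joint} as immediate special cases; your care with the $\alpha^{-d}$ normalization is exactly what makes the prefactor come out as $(\alpha^2+2\sigma^2)^{-d/2}$ with no stray powers of $2\pi$. As a side note, your expression for $\|\nabla_{\mu_0}\MMD^2_\alpha\|$ is the consistent one: the paper's final proof displays an extra factor $\alpha^{2d}$ in $\|\nabla_{\mu_0}\MMD^2\|^2$ that does not follow from its own \Cref{prop:mmd_mmd_derivative} and is silently dropped in the subsequent analysis of $\mathrm{f}(t)=t^{d+2}\exp[-t\|\mu_0\|^2]$, so the two arguments agree where it matters.
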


Before proving \Cref{prop:optimal_kernel_gradient_appendix}, let us provide some insights on the result. The quantity $\| \nabla_{\mu_0} \MMD^2_\alpha(\pi_{0, \sigma}, \pi_{\mu_0, \sigma})\|$ represents how much the mean of the Gaussian $\pi_{\mu_0, \sigma}$ is displaced when considering a flow on the mean of the Gaussian w.r.t. $\MMD^2_\alpha$. Intuitively, we aim for $\| \nabla_{\mu_0} \MMD^2_\alpha(\pi_{0, \sigma}, \pi_{\mu_0, \sigma})\|$ to be as large as possible as this represents the \emph{maximum displacement possible}. Hence, this justifies our goal of maximizing $\| \nabla_{\mu_0} \MMD^2_\alpha(\pi_{0, \sigma}, \pi_{\mu_0, \sigma})\|$ with respect to the width parameter $\alpha$.

We show that the optimal width $\alpha^\star$ has a closed form given by \eqref{eq:optimal_alpha_appendix}. It is notable that, assuming that $\sigma > 0$ is fixed, this quantity depends on $\| \mu_0 \|$, i.e. how far the modes of the two distributions are. This observation justifies our approach of following an \emph{adaptive} MMD flow at inference time. Finally, we observe that there exists a threshold, i.e. $\|\mu_0\|^2/(d+2) = 2 \sigma^2$ for which lower values of $\| \mu_0 \|$ still yield $\alpha^\star = 0$. This phase transition behavior is also observed in our experiments.

 We define $\mathrm{D}(\alpha, \sigma, \mu_0, \mu_1)$ for any $\alpha, \sigma > 0$ and $\mu_0, \mu_1 \in \rset^d$ given by
\begin{align}
    \textstyle \mathrm{D}(\alpha, \sigma, \mu_0, \mu_1) &= \textstyle \int_{\rset^d \times \rset^d} k_\alpha(x,y) \rmd \pi_{\mu_0, \sigma}(x) \rmd \pi_{\mu_1, \sigma}(y) . 
\end{align}

\begin{proposition}
\label{prop:general_D}
For any $\alpha, \sigma > 0$ and $\mu_0, \mu_1 \in \rset^d$ we have 
\begin{align}
\label{eq:D_full_formula}
    \mathrm{D}(\alpha, \sigma, \mu_0, \mu_1) &=  [\alpha^2 \sigma^2(1/\kappa^2 + 1/\alpha^2)]^{-d/2} \exp[\| \hat{\mu}_0 \|^2 / (2 \kappa^2) + \| \hat{\mu}_1 \|^2 / (2 \kappa^2) \\
    & \qquad \qquad - \langle \hat{\mu}_0 , \hat{\mu}_1 \rangle / \alpha^2 - \| \mu_0 \|^2 / (2 \sigma^2) - \| \mu_1 \|^2 / (2 \sigma^2)] ,
\end{align}
with 
\begin{align}
    \hat{\mu}_1 &= (\alpha^2 \mu_1  + \kappa^2 \mu_0 ) / (\kappa^2 + \alpha^2) , \\
\hat{\mu}_0 &= (\alpha^2 \mu_0 + \kappa^2 \mu_1 ) / (\kappa^2 + \alpha^2) , 
\end{align}
where $\kappa = (1/\sigma^2 + 1/\alpha^2)^{-1/2}$.
\end{proposition}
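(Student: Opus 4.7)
The plan is to view $\mathrm{D}(\alpha,\sigma,\mu_0,\mu_1)$ as a joint Gaussian integral on $\rset^{2d}$ and apply the standard multivariate Gaussian integration formula. Substituting the densities directly gives
\begin{equation*}
\mathrm{D} = \alpha^{-d}(2\pi\sigma^2)^{-d} \mathrm{e}^{-c} \int_{\rset^{2d}} \exp\!\Bigl[-\tfrac12 (x,y)^T \mathcal{A}(x,y) + \langle (x,y),\mathcal{b}\rangle\Bigr] \rmd x\,\rmd y,
\end{equation*}
with $c = (\|\mu_0\|^2+\|\mu_1\|^2)/(2\sigma^2)$, $\mathcal{b} = (\mu_0/\sigma^2, \mu_1/\sigma^2)^T$, and (using $1/\kappa^2 = 1/\alpha^2+1/\sigma^2$)
\begin{equation*}
\mathcal{A} = \begin{pmatrix} \kappa^{-2}\Id & -\alpha^{-2}\Id \\ -\alpha^{-2}\Id & \kappa^{-2}\Id \end{pmatrix}.
\end{equation*}
The standard formula then yields $\mathrm{D} = \alpha^{-d}(2\pi\sigma^2)^{-d}(2\pi)^d (\det\mathcal{A})^{-1/2}\exp\!\bigl(\tfrac12\mathcal{b}^T \mathcal{A}^{-1}\mathcal{b} - c\bigr)$, so the problem reduces to computing $\det\mathcal{A}$ and the mode $(x^\star,y^\star) := \mathcal{A}^{-1}\mathcal{b}$.

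Since $\mathcal{A}$ has commuting scalar blocks, $\det\mathcal{A} = [\kappa^{-4}-\alpha^{-4}]^d = [(\kappa^{-2}+\alpha^{-2})(\kappa^{-2}-\alpha^{-2})]^d$. The auxiliary identity $\kappa^{-2}-\alpha^{-2} = \sigma^{-2}$ (which follows directly from the definition of $\kappa$) simplifies this to $[(\kappa^{-2}+\alpha^{-2})/\sigma^2]^d$, and combining all constants collapses to exactly the normalization $[\alpha^2\sigma^2(1/\kappa^2+1/\alpha^2)]^{-d/2}$ stated in \eqref{eq:D_full_formula}.

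The core step is to identify $(x^\star,y^\star)$ with $(\hat\mu_0,\hat\mu_1)$. I would solve the block-scalar linear system $\kappa^{-2}x^\star - \alpha^{-2}y^\star = \mu_0/\sigma^2$ and $-\alpha^{-2}x^\star + \kappa^{-2}y^\star = \mu_1/\sigma^2$ by multiplying through by $\kappa^2\alpha^2$ and invoking the auxiliary identity $\kappa^2(\alpha^2+\sigma^2) = \alpha^2\sigma^2$; this produces $(x^\star,y^\star) = (\hat\mu_0,\hat\mu_1)$ after matching the weights $\alpha^2/(\kappa^2+\alpha^2)$ and $\kappa^2/(\kappa^2+\alpha^2)$ appearing in the definitions. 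Once this is in hand, $\tfrac12\mathcal{b}^T\mathcal{A}^{-1}\mathcal{b} = \tfrac12 (x^\star,y^\star)^T\mathcal{A}(x^\star,y^\star)$; reading off the blocks of $\mathcal{A}$ directly produces $\|\hat\mu_0\|^2/(2\kappa^2) + \|\hat\mu_1\|^2/(2\kappa^2) - \langle\hat\mu_0,\hat\mu_1\rangle/\alpha^2$, which combined with $-c$ yields the exponent in the statement.

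The main obstacle is the algebraic identification $(x^\star,y^\star) = (\hat\mu_0,\hat\mu_1)$; although elementary, it requires careful use of the three identities $1/\kappa^2 = 1/\alpha^2+1/\sigma^2$, $\kappa^{-2}-\alpha^{-2} = \sigma^{-2}$, and $\kappa^2(\alpha^2+\sigma^2) = \alpha^2\sigma^2$. The symmetric form of the exponent in the proposition then arises naturally from the Bayesian viewpoint: the product of three Gaussians in $(x,y)$ is itself a joint Gaussian density, and $(\hat\mu_0,\hat\mu_1)$ plays the role of its mode (or posterior mean), which by construction coincides with the critical point of the quadratic form in the exponent.
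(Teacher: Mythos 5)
Your proposal is correct and follows essentially the same route as the paper: both reduce $\mathrm{D}$ to a $2d$-dimensional Gaussian integral, complete the square (your invocation of the standard formula with mode $\mathcal{A}^{-1}b$ is just the packaged form of the paper's explicit construction of the centered quadratic $\mathrm{Q}$), solve the identical block-scalar linear system to identify $(\hat\mu_0,\hat\mu_1)$, and evaluate the same block determinant $(\kappa^{-4}-\alpha^{-4})^d$ via the identity $\kappa^{-2}-\alpha^{-2}=\sigma^{-2}$. All the intermediate identities and the final bookkeeping check out.
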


\begin{proof}
 In what follows, we start by computing $\mathrm{D}(\alpha, \sigma, \mu_0, \mu_1)$ for any $\alpha, \sigma > 0$ and $\mu_0, \mu_1 \in \rset^d$ given by
\begin{align}
    \textstyle \mathrm{D}(\alpha, \sigma, \mu_0, \mu_1) &= \textstyle \int_{\rset^d \times \rset^d} k_\alpha(x,y) \rmd \pi_{\mu_0, \sigma}(x) \rmd \pi_{\mu_1, \sigma}(y) \\
    & = 1/ (2 \uppi \sigma^2 \alpha)^d \textstyle \int_{\rset^d \times \rset^d} \exp[-\| x - y \|^2 / (2 \alpha^2)] \exp[-\| x - \mu_0\|^2 / (2 \sigma^2)] \exp[-\| y - \mu_1\|^2 / (2 \sigma^2)] \rmd x \rmd y 
    \\
    & = 1/ (2 \uppi \sigma^2 \alpha)^d \textstyle \int_{\rset^d \times \rset^d} \exp[-\| x - y \|^2 / (2 \alpha^2)-\| x - \mu_0\|^2 / (2 \sigma^2)-\| y - \mu_1\|^2 / (2 \sigma^2)] \rmd x \rmd y .
\end{align}
In what follows, we denote $\kappa = (1/ \sigma^2 + 1/\alpha^2)^{-1/2}$. We have 
\begin{align}
\textstyle \mathrm{D}(\alpha, \sigma, \mu_0, \mu_1) &= \textstyle C(\mu_0, \mu_1) \int_{\rset^d \times \rset^d} \exp[-\| x \|^2 / (2 \kappa^2) - \| y \|^2 / (2 \kappa^2) + \langle x, y \rangle / \alpha^2 + \langle x, \mu_0 \rangle / \sigma^2 + \langle y, \mu_1 \rangle / \sigma^2 ] \rmd x \rmd y ,
\end{align}
with $C(\mu_0, \mu_1) = 1/ (2 \uppi \sigma^2 \alpha)^d \exp[-\| \mu_0 \|^2 / (2 \sigma^2) - \| \mu_1 \|^2 / (2 \sigma^2) ]$. In what follows, we denote $\mathrm{P}(x,y)$ the second-order polynomial given by 
\begin{equation}
    \mathrm{P}(x,y) = \| x \|^2 / (2 \kappa^2) + \| y \|^2 / (2 \kappa^2) - \langle x, y \rangle / \alpha^2 - \langle x, \mu_0 \rangle / \sigma^2 - \langle y, \mu_1 \rangle / \sigma^2 . 
\end{equation}
Note that we have 
\begin{equation}
\label{eq:relation_D_P}
     \mathrm{D}(\alpha, \sigma, \mu_0, \mu_1) = \textstyle C(\mu_0, \mu_1) \int_{\rset^d \times \rset^d} \exp[-\mathrm{P}(x,y)] \rmd x \rmd y .
\end{equation}
Next, for any $\hat{\mu}_0, \hat{\mu}_1 \in \rset^d$, we consider $\mathrm{Q}(x,y)$ given by 
\begin{align}
    \mathrm{Q}(x,y) &= \| x - \hat{\mu}_0 \|^2 / (2 \kappa^2) + \| y - \hat{\mu}_1 \|^2 / (2 \kappa^2) - \langle x - \hat{\mu}_0 , y - \hat{\mu}_1 \rangle / \alpha^2 \\
&= \| x \|^2 / (2 \kappa^2)  + \| \hat{\mu}_0 \|^2 / (2 \kappa^2) + \| y \|^2 / (2 \kappa^2) + \| \hat{\mu}_1 \|^2 / (2 \kappa^2) - \langle x, \hat{\mu}_0 \rangle / \kappa^2 - \langle y, \hat{\mu}_1 \rangle / \kappa^2 -  \langle x - \hat{\mu}_0 , y - \hat{\mu}_1 \rangle / \alpha^2 \\
&= \| x \|^2 / (2 \kappa^2)  + \| \hat{\mu}_0 \|^2 / (2 \kappa^2) + \| y \|^2 / (2 \kappa^2) + \| \hat{\mu}_1 \|^2 / (2 \kappa^2) - \langle x, \hat{\mu}_0 \rangle / \kappa^2 - \langle y, \hat{\mu}_1 \rangle / \kappa^2 \\ 
& \qquad -  \langle x, y \rangle / \alpha^2  - \langle \hat{\mu}_0, \hat{\mu}_1 \rangle / \alpha^2 + \langle y , \hat{\mu}_0  \rangle / \alpha^2  + \langle x, \hat{\mu}_1 \rangle / \alpha^2 \\
&= \mathrm{P}(x,y) + \| \hat{\mu}_0 \|^2 / (2 \kappa^2) + \| \hat{\mu}_1 \|^2 / (2 \kappa^2) -  \langle x, \hat{\mu}_0 \rangle / \kappa^2 - \langle y, \hat{\mu}_1 \rangle / \kappa^2 + \langle x, \mu_0 \rangle / \sigma^2 + \langle y, \mu_1 \rangle / \sigma^2 \\
& \qquad  - \langle \hat{\mu}_0, \hat{\mu}_1 \rangle/\alpha^2 + \langle y, \hat{\mu}_0 \rangle / \alpha^2  + \langle x, \hat{\mu}_1 \rangle / \alpha^2 \\
&= \mathrm{P}(x,y) + \| \hat{\mu}_0 \|^2 / (2 \kappa^2) + \| \hat{\mu}_1 \|^2 / (2 \kappa^2)  - \langle \hat{\mu}_0, \hat{\mu}_1 \rangle/\alpha^2 \\
& \qquad + \langle x, \mu_0 / \sigma^2 - \hat{\mu}_0 / \kappa^2 + \hat{\mu}_1 / \alpha^2 \rangle + \langle y, \mu_1 / \sigma^2 - \hat{\mu}_1 / \kappa^2 + \hat{\mu}_0 / \alpha^2 \rangle . 
\end{align}
In what follows, we set $\hat{\mu}_0, \hat{\mu}_1$ such that 
\begin{align}
    \mu_1 / \sigma^2 &= \hat{\mu}_1 / \kappa^2 - \hat{\mu}_0 / \alpha^2 ,\\
    \mu_0 / \sigma^2 &= \hat{\mu}_0 / \kappa^2 - \hat{\mu}_1 / \alpha^2 .
\end{align}
We get that 
\begin{align}
    \hat{\mu}_1 &= (\mu_1 / (\sigma^2 \kappa^2) + \mu_0 / (\sigma^2 \alpha^2)) / (1/\kappa^4 - 1/\alpha^4) , \\
\hat{\mu}_0 &= (\mu_0 / (\sigma^2 \kappa^2) + \mu_1 / (\sigma^2 \alpha^2)) / (1/\kappa^4 - 1/\alpha^4) . 
\end{align}
We have that 
\begin{equation}
\label{eq:identity_alpha_kappa}
    \sigma^2 (1/\kappa^4 - 1/\alpha^4) = \sigma^2(1/\sigma^4 + 2/(\sigma^2 \alpha^2)) = 1/\sigma^2 + 2 / \alpha^2 = 1/\kappa^2 + 1/\alpha^2 . 
\end{equation}
Therefore, we get that 
\begin{align}
    \hat{\mu}_1 &= (\mu_1 / \kappa^2 + \mu_0 / \alpha^2) / (1/\kappa^2 + 1/\alpha^2) , \\
\hat{\mu}_0 &= (\mu_0 /  \kappa^2 + \mu_1 /  \alpha^2) / (1/\kappa^2 + 1/\alpha^2) . 
\end{align}
Finally, we get that 
\begin{align}
    \hat{\mu}_1 &= (\alpha^2 \mu_1  + \kappa^2 \mu_0 ) / (\kappa^2 + \alpha^2) , \\
\hat{\mu}_0 &= (\alpha^2 \mu_0 + \kappa^2 \mu_1 ) / (\kappa^2 + \alpha^2) . 
\end{align}
With this choice, we get that 
\begin{equation}
\label{eq:relation_P_Q}
    \mathrm{P}(x,y) = \mathrm{Q}(x,y) - \| \hat{\mu}_0 \|^2 / (2 \kappa^2) - \| \hat{\mu}_1 \|^2 / (2 \kappa^2) + \langle \hat{\mu}_0, \hat{\mu}_1 \rangle / \alpha^2
\end{equation}
We also have that for any $x, y \in \rset^d$
\begin{equation}
    \mathrm{Q}(x,y) = (1/2) \left( \begin{matrix} x - \hat{\mu}_0 \\ y - \hat{\mu}_1 \end{matrix} \right)^\top \left( \begin{matrix}& \Id/\kappa^2  & -\Id/\alpha^2 \\
    & -\Id/\alpha^2 & \Id/\kappa^2
    \end{matrix} \right)  \left( \begin{matrix} x - \hat{\mu}_0 \\ y - \hat{\mu}_1 \end{matrix} \right)
\end{equation}
Using this result we have that 
\begin{equation}
\label{eq:gaussian_Q_before}
   \textstyle \int_{\rset^d \times \rset^d} \exp[-\mathrm{Q}(x,y)] = (2 \uppi )^d \det(\Sigma^{-1})^{-1/2} ,
\end{equation}
with 
\begin{equation}
    \Sigma^{-1} = \left( \begin{matrix}& \Id/\kappa^2  & -\Id/\alpha^2 \\
    & -\Id/\alpha^2 & \Id/\kappa^2 
    \end{matrix} \right) .
\end{equation}
Using \eqref{eq:identity_alpha_kappa}, we get that 
\begin{equation}
    \det(\Sigma^{-1}) = [(1/\sigma^2)(1/\kappa^2 + 1/\alpha^2)]^d . 
\end{equation}
Combining this result and \eqref{eq:gaussian_Q_before} we get that 
\begin{equation}
\label{eq:gaussian_Q}
    \textstyle \int_{\rset^d \times \rset^d} \exp[-\mathrm{Q}(x,y)] = (2 \uppi )^d [(1/\sigma^2)(1/\kappa^2 + 1/\alpha^2)]^{-d/2} . 
\end{equation}
Combining this result, \eqref{eq:relation_P_Q} and \eqref{eq:relation_D_P} we get that 
\begin{equation}
    \mathrm{D}(\alpha, \sigma, \mu_0, \mu_1) = C(\mu_0, \mu_1) \exp[\| \hat{\mu}_0 \|^2 / (2 \kappa^2) + \| \hat{\mu}_1 \|^2 / (2 \kappa^2) - \langle \hat{\mu}_0 , \hat{\mu}_1 \rangle / \alpha^2] (2 \uppi)^d [(1/\sigma^2)(1/\kappa^2 + 1/\alpha^2)]^{-d/2} . 
\end{equation}
Therefore, we get that 
\begin{align}
\label{eq:D_full_formula_proof}
    \mathrm{D}(\alpha, \sigma, \mu_0, \mu_1) &= [\alpha^2 \sigma^2(1/\kappa^2 + 1/\alpha^2)]^{-d/2} \exp[\| \hat{\mu}_0 \|^2 / (2 \kappa^2) + \| \hat{\mu}_1 \|^2 / (2 \kappa^2) \\
    & \qquad \qquad - \langle \hat{\mu}_0 , \hat{\mu}_1 \rangle / \alpha^2 - \| \mu_0 \|^2 / (2 \sigma^2) - \| \mu_1 \|^2 / (2 \sigma^2)] . 
\end{align}
\end{proof}

We investigate two special cases of \Cref{prop:general_D}. 

First, we show that if $\mu_0 = \mu_1$ then $\mathrm{D}(\alpha, \sigma, \mu_0, \mu_0)$ does not depend on $\mu_0$.

\begin{proposition}
\label{prop:value_D_itself}
For any $\alpha, \sigma >0$ and $\mu_0 \in \rset^d$ we have $\mathrm{D}(\alpha, \sigma, \mu_0, \mu_0) = (\alpha^2 + 2\sigma^2)^{-d/2}$.
\end{proposition}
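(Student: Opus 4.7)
The plan is to directly substitute $\mu_1 = \mu_0$ into the general formula of \Cref{prop:general_D} and verify that both the quadratic exponent collapses to zero and the prefactor simplifies to $(\alpha^2+2\sigma^2)^{-d/2}$.

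First, I would compute the auxiliary means $\hat\mu_0, \hat\mu_1$ at $\mu_1 = \mu_0$. From
\begin{equation*}
\hat\mu_1 = (\alpha^2 \mu_1 + \kappa^2 \mu_0)/(\kappa^2+\alpha^2), \qquad
\hat\mu_0 = (\alpha^2 \mu_0 + \kappa^2 \mu_1)/(\kappa^2+\alpha^2),
\end{equation*}
setting $\mu_1 = \mu_0$ gives $\hat\mu_0 = \hat\mu_1 = \mu_0$. I would then plug this into the exponent appearing in \eqref{eq:D_full_formula} and use the defining identity $1/\kappa^2 = 1/\sigma^2 + 1/\alpha^2$ to rewrite
\begin{equation*}
\tfrac{\|\mu_0\|^2}{2\kappa^2} + \tfrac{\|\mu_0\|^2}{2\kappa^2} - \tfrac{\|\mu_0\|^2}{\alpha^2} - \tfrac{\|\mu_0\|^2}{2\sigma^2} - \tfrac{\|\mu_0\|^2}{2\sigma^2}
= \|\mu_0\|^2\bigl(\tfrac{1}{\kappa^2} - \tfrac{1}{\alpha^2} - \tfrac{1}{\sigma^2}\bigr) = 0.
\end{equation*}
Hence the exponential factor equals $1$, and in particular the expression becomes independent of $\mu_0$, which matches the claim.

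Next, I would simplify the prefactor $[\alpha^2 \sigma^2 (1/\kappa^2 + 1/\alpha^2)]^{-d/2}$. Again using $1/\kappa^2 = 1/\sigma^2 + 1/\alpha^2$, this becomes
\begin{equation*}
\bigl[\alpha^2\sigma^2\bigl(\tfrac{1}{\sigma^2} + \tfrac{2}{\alpha^2}\bigr)\bigr]^{-d/2} = (\alpha^2 + 2\sigma^2)^{-d/2},
\end{equation*}
which is exactly the right-hand side. Combining the two computations yields the result.

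There is essentially no obstacle here: the claim is a special case of \Cref{prop:general_D} that is engineered to produce cancellations. The only thing to be careful about is consistent bookkeeping of the factor $1/\kappa^2 = 1/\sigma^2 + 1/\alpha^2$ in both the exponent and the prefactor; one could alternatively derive it more directly from the convolution identity for Gaussians, noting that $\int k_\alpha(x,y)\,\rmd\pi_{\mu_0,\sigma}(x)\,\rmd\pi_{\mu_0,\sigma}(y) = \int k_\alpha(0,z)\,\rmd\pi_{0,\sqrt{2}\sigma}(z)$, which is the density at $0$ of $\mathrm{N}(0,(\alpha^2+2\sigma^2)\Id)$ up to the normalization built into $k_\alpha$, giving the same answer as a sanity check.
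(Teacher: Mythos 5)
Your proof is correct and follows essentially the same route as the paper's: substitute $\mu_1=\mu_0$ into \Cref{prop:general_D}, observe $\hat\mu_0=\hat\mu_1=\mu_0$ so the exponent collapses to $\|\mu_0\|^2(1/\kappa^2-1/\alpha^2-1/\sigma^2)=0$, and simplify the prefactor via $1/\kappa^2=1/\sigma^2+1/\alpha^2$. You in fact spell out the prefactor simplification more explicitly than the paper does, and the Gaussian-convolution sanity check is a nice but inessential addition.
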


\begin{proof}
We have that $\hat{\mu}_0 = \hat{\mu}_1 = \mu_1 = \mu_0$ in \Cref{prop:general_D}. In addition, we have that 
\begin{equation}
    (1/2\kappa^2) + (1/2\kappa^2) - 1/\alpha^2 - 1/(2\sigma^2) - 1/(2\sigma^2) = 0 .
\end{equation}
Therefore, we have that 
\begin{equation}
    \exp[\| \hat{\mu}_0 \|^2 / (2 \kappa^2) + \| \hat{\mu}_1 \|^2 / (2 \kappa^2) - \langle \hat{\mu}_0 , \hat{\mu}_1 \rangle / \alpha^2 - \| \mu_0 \|^2 / (2 \sigma^2) - \| \mu_1 \|^2 / (2 \sigma^2)] = 1 , 
\end{equation}
which concludes the proof upon using that $1/\kappa^2 = 1/\alpha^2 + 1/\sigma^2$.
\end{proof}

\Cref{prop:value_D_itself} might seem surprising at first but in fact it simply highlights the fact that when trying to differentiate a Gaussian measure with itself, the result is independent of the location of the Gaussian and only depends on its scale. 
Then, we study the case where $\mu_1 = 0$. 

\begin{proposition}
\label{prop:value_D_joint}
For any $\alpha, \sigma >0$ and $\mu_0 \in \rset^d$ we have 
\begin{equation}
    \mathrm{D}(\alpha, \sigma, \mu_0, 0) =  (\alpha^2 + 2\sigma^2)^{-d/2} \exp[-\| \mu_0 \|^2 /(2(\alpha^2 + 2\sigma^2))].
\end{equation}
\end{proposition}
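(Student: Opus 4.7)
The plan is to apply \Cref{prop:general_D} directly with $\mu_1 = 0$ and simplify. Under this specialization, the auxiliary means reduce to
\begin{equation}
\hat{\mu}_0 = \tfrac{\alpha^2}{\kappa^2 + \alpha^2}\,\mu_0, \qquad \hat{\mu}_1 = \tfrac{\kappa^2}{\kappa^2+\alpha^2}\,\mu_0,
\end{equation}
so that $\|\hat{\mu}_0\|^2$, $\|\hat{\mu}_1\|^2$ and $\langle \hat{\mu}_0, \hat{\mu}_1 \rangle$ are all proportional to $\|\mu_0\|^2$ with explicit rational factors in $\alpha, \kappa$. The prefactor $[\alpha^2 \sigma^2(1/\kappa^2 + 1/\alpha^2)]^{-d/2}$ is handled immediately by expanding $1/\kappa^2 = 1/\sigma^2 + 1/\alpha^2$, which yields $[\alpha^2 + 2\sigma^2]^{-d/2}$ as desired.

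The only actual work is to simplify the exponent
\begin{equation}
E := \|\hat{\mu}_0\|^2/(2\kappa^2) + \|\hat{\mu}_1\|^2/(2\kappa^2) - \langle \hat{\mu}_0, \hat{\mu}_1 \rangle/\alpha^2 - \|\mu_0\|^2/(2\sigma^2).
\end{equation}
Combining the first three terms over the common denominator $2\kappa^2(\kappa^2+\alpha^2)^2$ and factoring $\alpha^4 - \kappa^4 = (\alpha^2-\kappa^2)(\alpha^2+\kappa^2)$ brings this to
\begin{equation}
E = \|\mu_0\|^2\left[ \tfrac{\alpha^2 - \kappa^2}{2\kappa^2(\kappa^2 + \alpha^2)} - \tfrac{1}{2\sigma^2}\right].
\end{equation}
The key algebraic identities to deploy are $\alpha^2/\kappa^2 - 1 = \alpha^2/\sigma^2$ (from the definition of $\kappa$) and $\kappa^2 + \alpha^2 = \alpha^2(\alpha^2 + 2\sigma^2)/(\alpha^2 + \sigma^2)$, which together collapse the bracket to $-1/(2(\alpha^2 + 2\sigma^2))$. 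Combining with the prefactor gives exactly the claimed formula.

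There is no real obstacle here: the statement is a specialization of the already-proved \Cref{prop:general_D}, and the simplification is a routine manipulation of rational functions of $\alpha, \sigma, \kappa$. A useful sanity check along the way is that setting $\mu_0 = 0$ recovers $\mathrm{D}(\alpha, \sigma, 0, 0) = (\alpha^2 + 2\sigma^2)^{-d/2}$, in agreement with \Cref{prop:value_D_itself}.
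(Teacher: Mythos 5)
Your proof is correct and follows essentially the same route as the paper: specialize \Cref{prop:general_D} to $\mu_1=0$, collapse the prefactor via $1/\kappa^2 = 1/\sigma^2 + 1/\alpha^2$, and reduce the exponent to $-\|\mu_0\|^2/(2(\alpha^2+2\sigma^2))$ using the same $\kappa$--$\alpha$--$\sigma$ identities (the paper packages them as $\alpha^4/\kappa^2-\kappa^2 = \alpha^2(\alpha^2+\kappa^2)/\sigma^2$, which is equivalent to your two identities). As a minor point, your specializations $\hat{\mu}_0 = \alpha^2\mu_0/(\kappa^2+\alpha^2)$ and $\hat{\mu}_1 = \kappa^2\mu_0/(\kappa^2+\alpha^2)$ are the correct ones implied by \Cref{prop:general_D}; the paper's displayed versions carry a typographical extra square in the denominator, but the final formula agrees.
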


\begin{proof}
First, we have that 
\begin{equation}
    \hat{\mu}_0 = \alpha^2/(\kappa^2+\alpha^2)^2 \mu_0 , \qquad \hat{\mu}_1 = \kappa^2/(\kappa^2+\alpha^2)^2 \mu_0 . 
\end{equation}
Therefore, we get that 
\begin{equation}
    \mathrm{D}(\alpha, \sigma, \mu_0, 0) = [\sigma^2(1/\kappa^2 + 1/\alpha^2)]^{d/2} \exp[(1/2)\{ (\alpha^4/\kappa^2 - \kappa^2)/(\kappa^2 + \alpha^2) - 1 /\sigma^2\} \| \mu_0 \|^2]
\end{equation}
Using \eqref{eq:identity_alpha_kappa} we get that 
\begin{equation}
    \alpha^4/\kappa^2 - \kappa^2 = \alpha^2 (\alpha^2 + \kappa^2) /\sigma^2 .
\end{equation}
Therefore, we get that 
\begin{equation}
    (\alpha^4/\kappa^2 - \kappa^2)/(\kappa^2 + \alpha^2) - 1 /\sigma^2 = (\alpha^2/(\alpha^2 + \kappa^2) - 1)/\sigma^2 = -1 /(\alpha^2(1 + 2\sigma^2/\alpha^2)) ,
\end{equation}
which concludes the proof.
\end{proof}

Using \Cref{prop:value_D_itself}, \Cref{prop:value_D_joint} and definition~\eqref{eq:mmd_squared_appendix}, we have the following result.

\begin{proposition}
\label{prop:mmd_mmd_derivative}
For any $\alpha, \sigma >0$ and $\mu_0 \in \rset^d$ we have
\begin{equation}
    \MMD^2(\pi_{0, \sigma}, \pi_{\mu_0, \sigma}) = 2(\alpha^2 + 2\sigma^2)^{-d/2} (1 - \exp[-\| \mu_0 \|^2 /(2(\alpha^2 + 2\sigma^2))]) . 
\end{equation}
In addition, we have 
\begin{equation}
    \nabla_{\mu_0} \MMD^2(\pi_{0, \sigma}, \pi_{\mu_0, \sigma}) = -2 (\alpha^2 + 2\sigma^2)^{-d/2-1} \exp[-\| \mu_0 \|^2 /(2(\alpha^2 + 2\sigma^2))] \mu_0 . 
\end{equation}
\end{proposition}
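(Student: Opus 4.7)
The plan is to derive both formulas directly by combining the three previous propositions, since all the substantive work has been done. Neither claim requires new integration; the MMD identity reduces to an arithmetic combination of the quantities $\mathrm{D}(\alpha,\sigma,\cdot,\cdot)$ already computed, and the gradient follows by a one-line differentiation.

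First I would expand the squared MMD via the kernel mean embedding identity in \eqref{eq:mmd_squared_appendix} applied to $\pi_{0,\sigma}$ and $\pi_{\mu_0,\sigma}$, writing
\begin{equation*}
\MMD_\alpha^2(\pi_{0,\sigma}, \pi_{\mu_0,\sigma}) = \mathrm{D}(\alpha,\sigma,0,0) - 2 \mathrm{D}(\alpha,\sigma,0,\mu_0) + \mathrm{D}(\alpha,\sigma,\mu_0,\mu_0).
\end{equation*}
Then I would invoke \Cref{prop:value_D_itself} to evaluate the two diagonal terms: both equal $(\alpha^2+2\sigma^2)^{-d/2}$, independently of the mean. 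For the cross term, I would apply \Cref{prop:value_D_joint} (using symmetry $\mathrm{D}(\alpha,\sigma,0,\mu_0)=\mathrm{D}(\alpha,\sigma,\mu_0,0)$, which follows from the symmetry of the kernel and the definition of $\mathrm{D}$) to obtain $(\alpha^2+2\sigma^2)^{-d/2}\exp[-\|\mu_0\|^2/(2(\alpha^2+2\sigma^2))]$. Substituting these three values and factoring out $2(\alpha^2+2\sigma^2)^{-d/2}$ immediately yields the first displayed identity.

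For the gradient, I would treat $\alpha$ and $\sigma$ as constants and differentiate the closed form in $\mu_0$. The prefactor $2(\alpha^2+2\sigma^2)^{-d/2}$ is $\mu_0$-independent, so only the exponential contributes. Using $\nabla_{\mu_0}\{-\|\mu_0\|^2/(2(\alpha^2+2\sigma^2))\} = -\mu_0/(\alpha^2+2\sigma^2)$ and the chain rule gives
\begin{equation*}
\nabla_{\mu_0}\MMD_\alpha^2(\pi_{0,\sigma}, \pi_{\mu_0,\sigma}) = -2(\alpha^2+2\sigma^2)^{-d/2-1}\exp[-\|\mu_0\|^2/(2(\alpha^2+2\sigma^2))]\,\mu_0,
\end{equation*}
which is exactly the second formula.

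There is essentially no obstacle: the calculation is a substitution plus a routine chain rule. The only minor point worth flagging is ensuring the symmetry $\mathrm{D}(\alpha,\sigma,0,\mu_0)=\mathrm{D}(\alpha,\sigma,\mu_0,0)$, which is immediate from the symmetry of $k_\alpha$ and the product structure of the integral defining $\mathrm{D}$, so that \Cref{prop:value_D_joint} can be applied to the cross term in either orientation.
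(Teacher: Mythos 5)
Your proposal is correct and follows exactly the paper's route: the paper's own proof of this proposition is precisely the one-line combination of \Cref{prop:value_D_itself}, \Cref{prop:value_D_joint} and definition~\eqref{eq:mmd_squared_appendix} that you spell out (including the symmetry of $\mathrm{D}$ in its last two arguments, which the paper leaves implicit). One small remark: your correct intermediate step $\nabla_{\mu_0}\{-\|\mu_0\|^2/(2(\alpha^2+2\sigma^2))\}=-\mu_0/(\alpha^2+2\sigma^2)$, combined with the outer factor $-2(\alpha^2+2\sigma^2)^{-d/2}$ multiplying the exponential, actually yields $+2(\alpha^2+2\sigma^2)^{-d/2-1}\exp[-\|\mu_0\|^2/(2(\alpha^2+2\sigma^2))]\,\mu_0$, so the minus sign in your final display (and in the paper's stated formula) is a sign slip; it is immaterial for what follows, since only $\|\nabla_{\mu_0}\MMD^2_\alpha\|$ enters \Cref{prop:optimal_kernel_gradient_appendix}.
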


Finally, we have the following proposition.

\begin{proposition}
For any $\mu_0 \in \rset^d$ and $\sigma >0$ let $\alpha^\star$ be given by 
\begin{equation}
    \textstyle \alpha^\star = \argmax_{\alpha \geq 0} \| \nabla_{\mu_0} \MMD^2(\pi_{0, \sigma}, \pi_{\mu_0, \sigma})\| .
\end{equation}
Then, we have that 
\begin{equation}
    \alpha^\star = \mathrm{ReLU}(\| \mu_0 \|^2/(d+2) - 2 \sigma^2 )^{1/2} . 
\end{equation}
\end{proposition}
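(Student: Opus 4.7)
The plan is to leverage the closed-form expression for the MMD gradient given in \Cref{prop:mmd_mmd_derivative} and reduce the optimization to a one-dimensional calculus problem.

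First, I would take the norm of the gradient formula
\begin{equation}
\nabla_{\mu_0} \MMD^2_\alpha(\pi_{0,\sigma}, \pi_{\mu_0,\sigma}) = -2(\alpha^2 + 2\sigma^2)^{-d/2-1} \exp[-\|\mu_0\|^2/(2(\alpha^2 + 2\sigma^2))] \mu_0,
\end{equation}
which yields $\|\nabla_{\mu_0} \MMD^2_\alpha\| = 2\|\mu_0\| \, g(\alpha^2 + 2\sigma^2)$, where
\begin{equation}
g(u) = u^{-d/2-1} \exp[-\|\mu_0\|^2/(2u)] .
\end{equation}
The factor $\|\mu_0\|$ is independent of $\alpha$, so maximizing $\|\nabla_{\mu_0} \MMD^2_\alpha\|$ over $\alpha \geq 0$ is equivalent to maximizing $g(u)$ over $u \geq 2\sigma^2$ (via the change of variables $u = \alpha^2 + 2\sigma^2$).

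Next, I would compute the critical point of $g$ by differentiating $\log g(u) = -(d/2+1)\log u - \|\mu_0\|^2/(2u)$, giving
\begin{equation}
(\log g)'(u) = -(d/2+1)/u + \|\mu_0\|^2/(2u^2),
\end{equation}
which vanishes at the unique $u^\star = \|\mu_0\|^2/(d+2)$. A quick inspection of the boundary behavior ($g(u) \to 0$ as $u \to 0^+$ since $\|\mu_0\|^2/(2u)$ dominates $\log u$, and $g(u) \to 0$ as $u \to \infty$) shows $u^\star$ is the unique global maximizer of $g$ on $(0,\infty)$, and $g$ is decreasing on $(u^\star, \infty)$.

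Finally, I would split into two cases based on whether the unconstrained optimum $u^\star$ lies inside the feasible region $[2\sigma^2, \infty)$. If $\|\mu_0\|^2/(d+2) \geq 2\sigma^2$, then $\alpha^\star = \sqrt{u^\star - 2\sigma^2} = (\|\mu_0\|^2/(d+2) - 2\sigma^2)^{1/2}$. Otherwise, $g$ is decreasing on $[2\sigma^2, \infty)$, so the constrained maximum is attained at the boundary $\alpha^\star = 0$. Combining both cases yields exactly the $\mathrm{ReLU}$ expression claimed. I do not anticipate a hard step here: given \Cref{prop:mmd_mmd_derivative}, the argument is elementary one-variable calculus, and the only subtle point is correctly handling the constraint $\alpha \geq 0$ (equivalently $u \geq 2\sigma^2$), which is precisely what produces the phase transition encoded by the $\mathrm{ReLU}$.
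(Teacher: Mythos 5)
Your proposal is correct and follows essentially the same route as the paper: both start from the closed-form gradient in \Cref{prop:mmd_mmd_derivative} and reduce the problem to maximizing a one-variable function, with the phase transition arising from the constraint $\alpha \geq 0$. The only cosmetic difference is your change of variables $u = \alpha^2 + 2\sigma^2$ (and working with the norm rather than its square), whereas the paper uses $t = 1/(\alpha^2 + 2\sigma^2)$ and analyzes $t \mapsto t^{d+2}\exp[-t\|\mu_0\|^2]$ on $[0, 1/(2\sigma^2)]$; the two are equivalent.
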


\begin{proof}
Let $\sigma > 0$ and $\mu_0 \in \rset^d$.
First, using \Cref{prop:mmd_mmd_derivative}, we have that for 
\begin{equation}
    \| \nabla_{\mu_0} \MMD^2(\pi_{0, \sigma}, \pi_{\mu_0, \sigma}) \|^2 = 4 \alpha^{2d} \| \mu_0 \|^2 (\alpha^2 + 2\sigma^2)^{-d-2}\exp[-\| \mu_0 \|^2 / (\alpha^2 + 2 \sigma^2)] .
\end{equation}
Next, we study the function $\mathrm{f}: \ [0, t_0] \to \rset$ given for any $t  \in [0, t_0]$ by
\begin{equation}
    \mathrm{f}(t) = t^{d+2} \exp[-t\| \mu_0 \|^2] ,
\end{equation}
with $t_0 = 1/(2\sigma^2)$. We have that 
\begin{equation}
    \mathrm{f}'(t) = t^{d+1} \exp[-t \| \mu_0 \|^2] ((d+2) - \| \mu_0 \|^2 t) . 
\end{equation}
We then consider two cases. First, if $t_0 \leq (d+2) / \| \mu_0 \|^2$, i.e. $\sigma^2 \leq \| \mu_0 \|^2 / (2(d+2))$, then $\mathrm{f}$ is increasing on $[0,t_0]$ and we have that $f$ is maximum if $t = t_0$. Hence, if $\sigma^2 \leq \| \mu_0 \|^2 / (2(d+2))$, we have that $\alpha^\star = 0$. Second, if  $t_0 \leq (d+2) / \| \mu_0 \|^2$, i.e. $\sigma^2 \leq \| \mu_0 \|^2 / (2(d+2))$ then $\mathrm{f}$ is increasing on $[0,t^\star]$, non-increasing on $[t^\star, t_0]$ with $t^\star = (d+2) / \| \mu_0 \|^2$ and we have that $f$ is maximum if $t = t^\star$. Hence, if $\sigma^2 \geq \| \mu_0 \|^2 / (2(d+2))$, we have that $\alpha^\star = (\| \mu_0 \|^2 / (d+2) - 2 \sigma^2)^{1/2}$, which concludes the proof.
\end{proof}

\subsection{Phase transition behaviour}

\begin{figure}[t]
    \centering
    \includegraphics[scale=0.27]{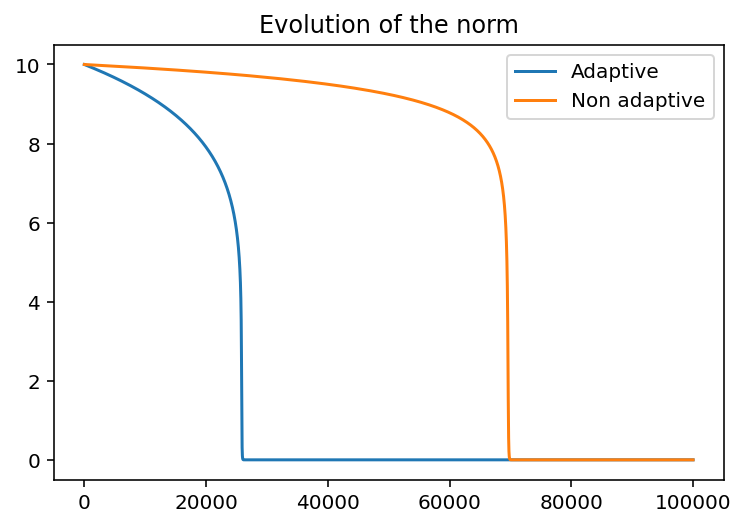}
    \hfill
    \includegraphics[scale=0.27]{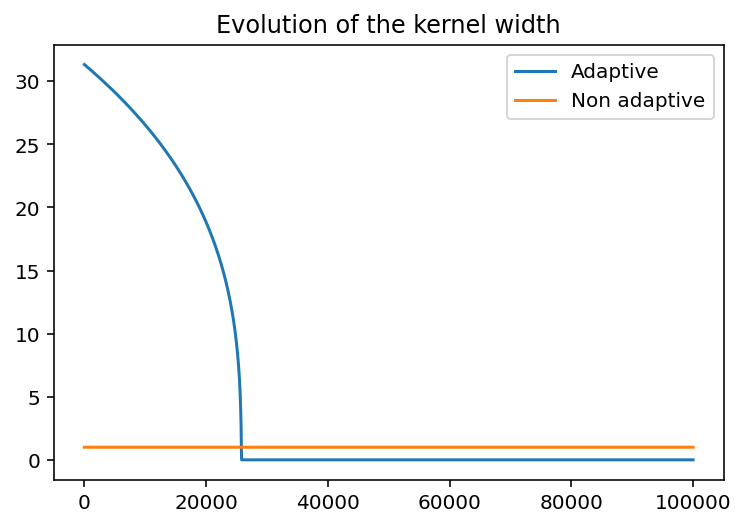}
    \caption{Evolution of the norm of the mean $\mu_t$ of the Gaussian distribution $\pi_{\mu_t, \sigma}$ according to a gradient flow on the mean $\mu_t$ w.r.t. $\MMD_{\alpha_t}$. In the \emph{adaptive} case $\alpha_t$ is given by \Cref{prop:optimal_kernel_gradient} while in the \emph{non adaptive} case, $\alpha_t = \alpha_0 = 1$. In our experiment we consider $d=1$ and $\sigma =1$, for illustration purposes.}
\label{fig:grad_flow_gaussian}
\end{figure}

\section{Image generation samples}
\label{app:all_samples}

\subsection{CIFAR10 samples}
\label{app:samples}

Samples from $\DMMD$ with NFE=100 and NFE=250 are given in Figure~\ref{fig:samples_nfe100_vs_250}. Samples from $\DMMD$ with NFE=100 and from $a$-$\DMMD$ with NFE=50 are given in Figure~\ref{fig:samples_dmmd_vs_admmd}.

\begin{figure}
    \centering
    \includegraphics{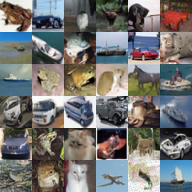}
    \includegraphics{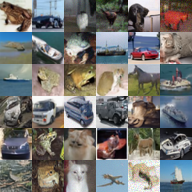}
    \caption{CIFAR-10 samples from $\DMMD$ with NFE=250 on the left and with NFE=100 on the right}
    \label{fig:samples_nfe100_vs_250}
\end{figure}

\begin{figure}
    \centering
    \includegraphics{figures/cifar10_samples_nfe_100.png}
    \includegraphics{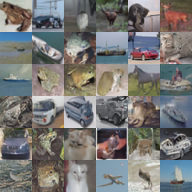}
    \caption{CIFAR-10 samples from $\DMMD$ with NFE=100 on the left and samples from the $a$-$\DMMD$-$e$ with NFE=50 on the right}
    \label{fig:samples_dmmd_vs_admmd}
\end{figure}

\subsection{Additional datasets samples}
\label{app:additional_datasets_samples}

Samples for MNIST are given in Figure~\ref{fig:samples_mnist}, for CELEB-A (64x64) are given in Figure~\ref{fig:samples_celeba} and for LSUN Church (64x64) are given in Figure~\ref{fig:samples_lsun_church}.

\begin{figure}
    \centering
    \includegraphics{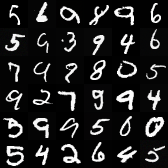}
    \caption{$\DMMD$ samples for MNIST.}
    \label{fig:samples_mnist}
\end{figure}

\begin{figure}
    \centering
    \includegraphics{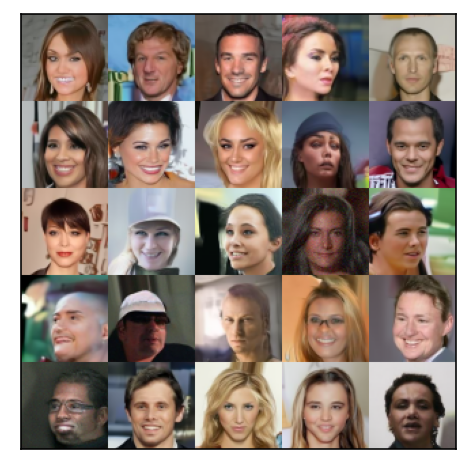}
    \caption{$\DMMD$ samples for CELEB-A (64x64).}
    \label{fig:samples_celeba}
\end{figure}

\begin{figure}
    \centering
    \includegraphics{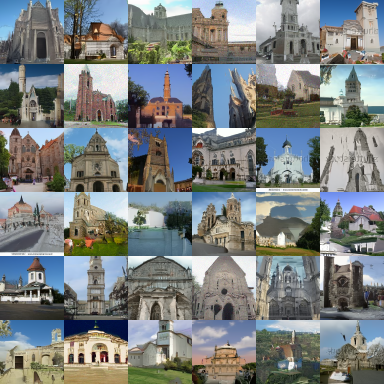}
    \caption{$\DMMD$ samples for LSUN Church (64x64).}
    \label{fig:samples_lsun_church}
\end{figure}

\end{document}